\newcolumntype{x}[1]{>{\centering\arraybackslash\hspace{0pt}}p{#1}}
\tikzset{
  matrixstyle/.style={
    matrix of nodes,
    nodes in empty cells,
    column sep      = -\pgflinewidth,
    row sep         = -\pgflinewidth,
    nodes={%
      inner sep=0mm,outer sep=0pt,
      minimum size=1.05cm,
      text height=\ht\strutbox,text depth=\dp\strutbox,
      draw,
      color=gray!40,
      line width=.5pt
    }
}}
\newcommand{\up}{8mm}
\newcommand{\xshift}{12mm}
\tikzstyle{background}=[rectangle,
\tikzstyle{background2}=[rectangle,
\tikzstyle{state}=[circle,
\tikzstyle{myarrow}=[circle,
\tikzstyle{background}=[rectangle,
\tikzstyle{background2}=[rectangle,
\tikzstyle{myarrow}=[circle,
\tikzstyle{background}=[rectangle,
\tikzstyle{background2}=[rectangle,
\tikzstyle{state}=[circle,
\tikzstyle{myarrow}=[circle,
\newcommand{\states}{\mathcal{S}}
\newcommand{\actions}{\mathcal{A}}
\newcommand{\T}{\mathcal{T}}
\newcommand{\E}{\mathbb{E}}
\newcommand{\bbR}{\mathbb{R}}
\newcommand{\calP}{\mathcal{P}}
\newcommand{\bbE}{\mathbb{E}}
\newcommand{\bbN}{\mathbb{N}}
\newcommand{\cov}[1]{\mathrm{Cov} ( #1 )}
\newcommand{\var}{\mathbb{V}}
\newcommand{\bbV}{\mathbb{V}}
\DeclareMathOperator*{\argmax}{argmax}
\newcommand{\aup}{\ensuremath{\textsc{up}}\xspace}
\newcommand{\adown}{\ensuremath{\textsc{down}}\xspace}
\newcommand{\qup}[1]{\hat{Q}(s_{#1}, \aup)}
\newcommand{\qdown}[1]{\hat{Q}(s_{#1}, \adown)}
\newcommand{\rup}[1]{\hat{r}(s_{#1}, \aup)}
\newcommand{\specificthanks}[1]{\@fnsymbol{#1}}
\newtheorem{definition}{Definition}
\newtheorem{proposition}{Proposition}
\declaretheorem[name=Proposition]{prop}
\declaretheorem[name=Lemma,numberlike=prop]{lemma}
\title{Successor Uncertainties: Exploration and Uncertainty in Temporal Difference Learning}
\author{%
    David Janz\thanks{Equal contribution}\textsuperscript{\hspace{0.55em}}\thanks{Work partly done during an internship at Microsoft Research Cambridge}\\
    University of Cambridge\\
    \texttt{dj343@cam.ac.uk}
    \And
    Jiri Hron\footnotemark[1]\\
    University of Cambridge\\
    \texttt{jh2084@cam.ac.uk}
    \And 
    Przemys\l{}aw Mazur\\
    Wayve Technologies
    \And
    Katja Hofmann\\
    Microsoft Research
    \And
    Jos{\'e} Miguel Hern{\'a}ndez-Lobato\\
    University of Cambridge\\
    Alan Turing Institute\\
    Microsoft Research
    \And
    Sebastian Tschiatschek\\
    Microsoft Research
}
\begin{document}

\maketitle
\begin{abstract}
Posterior sampling for reinforcement learning (PSRL) is an~effective method for balancing exploration and exploitation in reinforcement learning. Randomised value functions (RVF) can be viewed as a promising approach to scaling PSRL. However, we show that most contemporary algorithms combining RVF with neural network function approximation do not possess the~properties which make PSRL effective, and provably fail in sparse reward problems. Moreover, we find that propagation of uncertainty, a~property of PSRL previously thought important for exploration, does not preclude this failure. We use these insights to design Successor Uncertainties (SU), a cheap and easy to implement RVF algorithm that retains key properties of PSRL. SU is highly effective on hard tabular exploration benchmarks. Furthermore, on the Atari 2600 domain, it surpasses human performance on 38 of 49 games tested (achieving a median human normalised score of 2.09), and outperforms its closest RVF competitor, Bootstrapped DQN, on 36 of those.
\end{abstract}

\section{Introduction}
Perhaps the most important open question within reinforcement learning is how to effectively balance exploration of an unknown environment with exploitation of the~already accumulated knowledge \citep{kaelbling1996reinforcement,sutton1998reinforcement,busoniu2017reinforcement}.
In this paper, we study this in the classic setting where the~unknown environment is modelled as a~Markov Decision Process (MDP). 

Specifically, we focus on developing an algorithm that combines effective exploration with neural network function approximation.
Our approach is inspired by \textit{Posterior Sampling for Reinforcement Learning} \citep[PSRL;][]{strens2000abayesian,osband2013more}.
PSRL approaches the~exploration/exploitation trade-off by explicitly accounting for uncertainty about the~true underlying MDP.
In tabular settings, PSRL achieves impressive results and close to optimal regret \citep{osband2013more,osband2016lower}. 
However, many existing attempts to scale PSRL and combine it with neural network function approximation sacrifice the very aspects that make PSRL effective.
In this work, we examine several of these algorithms in the context of PSRL and:
\begin{enumerate}[itemsep=2pt,topsep=0pt]
    \item Prove that a previous avenue of research, propagation of uncertainty \citep{o2017uncertainty}, is neither sufficient nor necessary for effective exploration under posterior sampling. 
    \item Introduce \emph{Successor Uncertainties} (SU), a~cheap and scalable model-free exploration algorithm that retains crucial elements of the PSRL algorithm. 
    \item Show that SU is highly effective on hard tabular exploration problems.
    \item Present Atari 2600 results: SU outperforms Bootstrapped DQN \citep{osband2016deep} on $36/49$ and Uncertainty Bellman Equation \citep{o2017uncertainty} on $43/49$ games.
\end{enumerate}

\section{Background}\label{sec:background}

We use the following notation: for $X$ a random variable, we denote its distribution by $P_X$. Further, if $f$ is a~measurable function, then $f(X)$ follows the~distbution $f_\# P_X$ (the pushforward of $P_X$ by $f$).

We consider finite MDPs: a tuple $(\states, \actions, \T)$, where $\states$ is a finite state space, $\actions$ a finite action space, and $\T \colon \states \times \actions \to \mathcal{P}(\states \times \mathcal{R})$ a~transition probability kernel mapping from the~state-action space $\states \times \actions$ to the~set of probability distributions $\mathcal{P}(\states \times \mathcal{R})$ on the~product space of states $\states$ and rewards $\mathcal{R} \subset \bbR$; 
$\mathcal{R}$ is assumed to be bounded throughout.
For each time step $t \in \bbN$, the~agent selects an action $A_t$ by sampling from a~distribution specified by its policy $\pi \colon \states \to \calP(\actions)$ for the~current state $S_t$, and receives a~new state and reward $(S_{t+1}, R_{t+1}) \sim \mathcal{T} (S_t, A_t)$.
This gives rise to a Markov process $(S_t, A_t)_{t \geq 0}$ and a reward process $(R_t)_{t \geq 1}$. The~task of solving an MDP amounts to finding a~policy $\pi^\star$ which maximises the expected return $\smash{\E( \sum_{\tau=0}^\infty \gamma^\tau R_{\tau+1})}$ with $\gamma \in [0,1)$.

Crucial to many so called \emph{model-free methods} for solving MDPs is the~state-action value function (\emph{Q~function}) for a policy $\pi$:
$\smash{Q_t^{\pi} \coloneqq \E_t (\sum_{\tau=t}^\infty \gamma^{\tau - t} R_{\tau+1}) = \bbE_t (R_{t+1}) + \gamma \E_t ( Q_{t+1}^\pi} ) \, ,$ where $\bbE_t$ is used to denote an expectation conditional on $(S_\tau, A_\tau)_{\tau \leq t}$. Model-free methods use the~recursive nature of the~Bellman equation to construct a~model $\smash{\hat{Q}^\pi \colon \states \times \actions \to \bbR}$, which estimates $\smash{Q_t^\pi}$ for any given $\smash{(S_t=s, A_t=a)}$, through repeated application of the~\emph{Bellman operator} $T^\pi \colon \bbR^{\states \times \actions} \to \bbR^{\states \times \actions}$:
\begin{equation}\label{eq:bellman_operator}
    (T^\pi \hat{Q})(s, a) = \bbE_{(S', R') \sim \mathcal{T}(s, a)} [ R' + \gamma \bbE_{A'\sim \pi(S')} \hat{Q}(S', A')] \, .
\end{equation}
Since $\smash{T^\pi}$ is a~contraction on $\smash{\bbR^{\states \times \actions}}$ with a~unique fixed point $\smash{\hat{Q}^\pi}$, that is $\smash{T^\pi \hat{Q}^\pi = \hat{Q}^\pi}$, the~iterated application of $T^\pi$ to any initial $\smash{\hat{Q}}\in \bbR^{\states \times \actions}$ yields $\smash{\hat{Q}^\pi}$.
The expectations in equation~\eqref{eq:bellman_operator} can be estimated via Monte Carlo using experiences $(s, a, r, s^\prime)$ obtained through interaction with the MDP. 
A key challenge is then in obtaining experiences that are highly informative about the optimal policy.

A simple and effective approach to collecting such experiences is PSRL, a model-based algorithm based on two components: (i)~a~distribution over rewards and transition dynamics $P_{\hat{\mathcal{T}}}$ obtained using a~Bayesian modelling approach, treating rewards and transition probabilities as random variables; and (ii)~the~\emph{posterior sampling} exploration algorithm \citep{thompson1933likelihood,dearden1998bayesian} which samples $\smash{\hat{\mathcal{T}} \sim P_{\hat{\mathcal{T}}}}$, computes the~optimal policy $\hat{\pi}$ with respect to the~sampled $\smash{\hat{\mathcal{T}}}$, and follows $\hat{\pi}$ for the~duration of a~single episode. The~collected data are then~used to update the~$\smash{P_{\hat{\mathcal{T}}}}$ model, and the~whole process is iterated until convergence. 

While PSRL performs very well on tabular problems, it is computationally expensive and does not utilise any additional information about the~state space structure (e.g.\ visual similarity when states are represented by images).
A family of methods called Randomised Value Functions \citep[RVF;][]{osband2014generalization} attempt to overcome these issues by directly modelling a~distribution over Q~functions, $\smash{P_{\hat{Q}}}$, instead of over MDPs, $\smash{P_{\hat{\mathcal{T}}}}$.
Rather than acting greedily with respect to a~sampled MDP as in PSRL, the~agent then acts greedily with respect to a~sample $\smash{\hat{Q} \sim P_{\hat{Q}}}$ drawn at the~beginning of each episode, removing the~main computational bottleneck.
Since a~parametric model is often chosen for $\smash{P_{\hat{Q}}}$, the~switch to Q~function modelling also directly facilitates use of function approximation and thus generalisation between states.

\section{Exploration under function approximation}\label{sec:rpi}
Many exploration methods, including \citep{osband2014generalization,osband2016deep,moerland2017efficient,o2017uncertainty,azizzadenesheli2018efficient}, can be interpreted as combining the concept of RVF with neural network function approximation.
While the use of neural network function approximation allows these methods to scale to problems too complex for PSRL, it also brings about conceptual difficulties not present within PSRL and tabular RVF methods. Specifically, because a Q~function is defined with respect to a particular policy, constructing $\smash{P_{\hat{Q}}}$ requires selection of a reference policy or distribution over policies.
Methods that utilise a distribution over reference policies typically employ a bootstrapped estimator of the Q~function as we will discuss in more depth later. For now, we focus on methods that employ a single reference policy which commonly interleave two steps: (i)~inference of $\smash{P_{\hat{Q}^{\pi_i}}}$ for a~given policy $\pi_i$ using the available data (\emph{value prediction step}); (ii)~estimation of an improved policy $\pi_{i + 1}$ based on $\smash{P_{\hat{Q}^{\pi_i}}}$ (\emph{policy improvement step}).
While a common policy improvement choice is $\smash{\pi_{i+1} \colon s \mapsto \E_{P_{\hat{Q}^{\pi_i}}} [ G (\hat{Q})(s) ] }$, methods vary greatly in how they implement value prediction.
To gain a~better insight into the~value prediction step, we examine its idealised implementation:
Suppose we have access to a belief over MDPs, $\smash{P_{\hat{\mathcal{T}}}}$ (as in PSRL), and want to compute the~implied distribution $\smash{P_{\hat{Q}^\pi}}$ for a \emph{single} policy $\pi$.
The intuitive (albeit still computationally expensive) procedure is to: (i)~draw $\smash{\hat{\mathcal{T}} \sim P_{\hat{\mathcal{T}}}}$; and (ii)~repeatedly apply the~Bellman operator $T^\pi$ to an initial $\smash{\hat{Q}}$ for the~drawn $\smash{\hat{\mathcal{T}}}$ until convergence.
Denoting by $\smash{F^\pi\!\colon \hat{\mathcal{T}} \mapsto \hat{Q}^\pi}$ the map from $\smash{\hat{\mathcal{T}}}$ to the corresponding $\smash{\hat{Q}^\pi}$ for a policy $\pi$, the~distribution of resulting samples is $\smash{P_{\hat{Q}^\pi} = F_\#^\pi P_{\hat{\mathcal{T}}}}$. 

This idealised value prediction step motivates, for example, the \emph{Uncertainty Bellman Equation} \citep[UBE;][]{o2017uncertainty}. \citeauthor{o2017uncertainty}\ argue that to achieve effective exploration, it is necessary that the~uncertainty about each $\smash{\hat{Q}^\pi (s, a)}$, quantified by variance, is equal to the~uncertainty about the~immediate reward and the~next state's Q~value. This requirement can be formalised as follows:
\begin{definition}[Propagation of uncertainty]\label{def:prop_uncert}
    For a~given distribution $\smash{P_{\hat{\mathcal{T}}}}$ and policy $\pi$, we say that a~model $\smash{P_{\hat{Q}^\pi}}$ \emph{propagates uncertainty} according to $\smash{P_{\hat{\mathcal{T}}}}$ if for each $(s, a) \in \states \times \actions$ and $p=1, 2$
    \begin{equation*}
        \mathbb{E}_{P_{\hat{Q}^\pi}} [\hat{Q}^\pi(s, a)^p]
        =
        \mathbb{E}_{F_\#^\pi P_{\hat{\mathcal{T}}}} [\hat{Q}^\pi(s, a)^p]
        =
        \mathbb{E}_{
            P_{\hat{\mathcal{T}}} 
        }
        \! \bigl\{
            [\bbE_{(R', S') \sim \hat{\mathcal{T}}(s,a)} R' + \bbE_{A' \sim \pi (S')} F^\pi (\hat{\mathcal{T}})(S', A') ]^p
        \bigr\} \!
        \, .
    \end{equation*}
\end{definition}
In words, \emph{propagation of uncertainty} requires that the first two moments behave consistently under application of the Bellman operator. 

Propagation of uncertainty is a desirable property when using \emph{Upper Confidence Bounds} \citep[UCB;][]{auer2002using} for exploration, since UCB methods rely only on the first two moments of $\smash{P_{\hat{Q}^\pi}}$. However, propagation of uncertainty is not sufficient for effective exploration under posterior sampling. We show this in the context of the~binary tree MDP depicted in figure~\ref{fig:tree-mdp}.
To solve the~MDP, the~agent must execute a~sequence of $L$ uninterrupted \aup movements.
In the following proposition, we show that any~algorithm combining factorised symmetric distributions with posterior sampling (e.g.\ UBE) will solve this MDP with probability of at most $2^{-L}$ per episode, thus failing to outperform a uniform exploration policy. Importantly, the~sizes of marginal variances have no bearing on this result, meaning that propagation of uncertainty on its own does not preclude this failure mode.

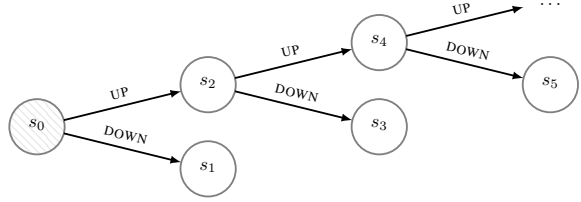
\begin{figure}[tb]        
    \centering\vspace{-2mm}
    \scalebox{0.7}{\begin{tikzpicture}[>=latex,text height=0.5ex]
   \node[state, pattern=north west lines, pattern color=gray!20] (s0) {};
   \node[state,right=of s0, xshift=\xshift, yshift=\up] (s1r) {};
   \node[state,right=of s0, xshift=\xshift, yshift=-\up] (s1l) {};
   \node[state,right=of s1r, xshift=\xshift, yshift=\up] (s2r) {};
   \node[state,right=of s1r, xshift=\xshift, yshift=-\up] (s2l) {};
   \node[state,draw=none,right=of s2r, xshift=\xshift, yshift=\up, color=black] (s3r) {\reflectbox{$\ldots$}};
   \node[state,right=of s2r, xshift=\xshift, yshift=-\up] (s3l) {};
   
   \node at (s0) {$s_0$};
   \node at (s1r) {$s_2$};
   \node at (s1l) {$s_1$};
   \node at (s2r) {$s_4$};
   \node at (s2l) {$s_3$};
   \node at (s3l) {$s_5$};
   
   \path[->,myarrow] (s0)  edge node[pos=0.5, sloped, yshift=2mm] (ls1l) {\footnotesize \textsc{up}} (s1r);
   \path[->,myarrow] (s0)  edge node[pos=0.5, sloped, yshift=2mm] (ls1l) {\footnotesize \textsc{down}} (s1l);
   
   \path[->,myarrow] (s1r)  edge node[pos=0.5, sloped, yshift=2mm] (ls1l) {\footnotesize \textsc{up}} (s2r);
   \path[->,myarrow] (s1r)  edge node[pos=0.5, sloped, yshift=2mm] (ls1l) {\footnotesize \textsc{down}} (s2l);
   
   \path[->,myarrow] (s2r)  edge node[pos=0.5, sloped, yshift=2mm] (ls1l) {\footnotesize \textsc{up}} (s3r);
   \path[->,myarrow] (s2r)  edge node[pos=0.5, sloped, yshift=2mm] (ls1l) {\footnotesize \textsc{down}} (s3l);

\end{tikzpicture}}
    \caption{Binary tree MDP of size $L$. States $\states = \{s_0, \dots, s_{2L} \}$ are one-hot encoded; actions $\actions = \{a_1, a_2\}$ are mapped to movements $\{\aup, \adown \}$ according to a~random mapping drawn independently for each state.
    Reward of one is obtained after reaching $s_{2L}$ and zero otherwise. States with odd indices and $s_{2L}$ are terminal.}
    \label{fig:tree-mdp}
\end{figure}

\begin{restatable}{prop}{factorised}\label{prop:factorised_symm}
   Let $|\actions| > 1$, and $\smash{P_{\hat{Q}}}$ be a~factorised distribution, i.e.\ for $\smash{\hat{Q} \sim P_{\hat{Q}}}$, $\smash{\hat{Q}(s, a)}$ and $\smash{\hat{Q}(s', a')}$ are independent, $\forall (s, a) \neq (s', a')$, with symmetric marginals.
   Assume that for each $s \in \states$, the~marginal distributions of $\smash{\{ \hat{Q}(s, a) \colon a \in \actions \}}$ are all symmetric around the~same value $c_s \in \mathbb{R}$.
   Then the~probability of executing any given sequence of $L$ actions under $\hat{\pi} \sim \smash{G_\# P_{\hat{Q}}}$ is at most $\smash{2^{-L}}$.
\end{restatable}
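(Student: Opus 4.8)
The plan is to reduce the claim to a single--state statement and then settle that statement with a short reflection--symmetry argument. First I would note that the binary tree MDP is deterministic and acyclic, so ``executing a given sequence of $L$ actions'' means precisely that the greedy policy $\hat\pi = G(\hat{Q})$ outputs a prescribed action $b_j$ at each of $L$ \emph{pairwise distinct} states $t_0 = s_0, t_1, \dots, t_{L-1}$ encountered along the induced path (if no such path of length $L$ exists the probability is $0$ and we are done). Because $P_{\hat{Q}}$ is factorised, $G(\hat{Q})(t_j)$ is a function of the coordinate block $(\hat{Q}(t_j,a))_{a\in\actions}$ alone, and these blocks are mutually independent over the distinct states $t_0,\dots,t_{L-1}$; hence the probability of the whole sequence equals $\prod_{j=0}^{L-1}\bbP(G(\hat{Q})(t_j)=b_j)$. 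It thus suffices to show $\bbP(G(\hat{Q})(s)=b)\le \tfrac12$ for every state $s$ and action $b$.

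For this per--state bound I would use the common centering $c_s$. Fix $s$; translating every $\hat{Q}(s,a)$ by $-c_s$ leaves the greedy action unchanged and moves the common centre to $0$, so I may assume the marginals of $(\hat{Q}(s,a))_{a\in\actions}$ are symmetric about $0$. Being independent with each symmetric about $0$, their joint law is invariant under the coordinatewise sign flip $x\mapsto -x$, which maps the set of maximisers onto the set of minimisers. Consequently the greedy action and the ``anti-greedy'' action $\argmin_a \hat{Q}(s,a)$ are equal in distribution, so $\bbP(G(\hat{Q})(s)=b)=\bbP(\argmin_a \hat{Q}(s,a)=b)$. Since $|\actions|\ge 2$, the two actions coincide only on the event that all of $\{\hat{Q}(s,a)\}_a$ are equal, which contributes probability $1/|\actions|\le\tfrac12$; off that event $\{G(\hat{Q})(s)=b\}$ and $\{\argmin_a\hat{Q}(s,a)=b\}$ are disjoint, so their probabilities sum to at most $1$, and being equal each is at most $\tfrac12$. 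Multiplying the $L$ independent factors yields $2^{-L}$; I would close by observing that the bound is oblivious to the marginal variances (so propagation of uncertainty cannot help) and that a uniform policy attains the same $2^{-L}$ per--episode success probability.

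The step I expect to be delicate is the treatment of ties in the $\argmax$, i.e.\ atoms in the marginals: under a deterministic tie--break the statement can fail outright (take all marginals to be point masses at $c_s$, giving probability $1$), so one must use a tie--break rule compatible with the reflection — uniform--at--random tie breaking does the job, as then the ``all values equal'' event already contributes only $1/|\actions|$ and the sign flip still identifies the laws of the greedy and anti--greedy selections. Everything else is just the product structure plus the one--line symmetry, so this degenerate bookkeeping is the only real subtlety.
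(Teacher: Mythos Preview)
Your proposal is correct and follows essentially the same route as the paper: centre at $c_s$, use the sign-flip symmetry to identify the law of the $\argmax$ with that of the $\argmin$, note these events are disjoint so each has probability at most $\tfrac12$, and then exploit the product structure across distinct states via independence. Your treatment is in fact more careful than the paper's on the tie-breaking issue---the paper simply identifies $\{a=\argmax_{a'}\hat Q(s,a')\}$ with the strict event $\bigcap_{a'\ne a}\{\hat Q(s,a)>\hat Q(s,a')\}$ and does not discuss atoms, whereas you correctly flag that a deterministic tie-break can break the statement and that uniform-at-random tie-breaking restores it.
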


Propagation of uncertainty is furthermore not necessary for posterior sampling.
To see this, first note that for any given $\smash{P_{\hat{Q}^\pi}}$, the~posterior sampling procedure only depends on the~induced distribution over greedy policies, i.e. the~pushforward of $\smash{P_{\hat{Q}^\pi}}$ by the greedy operator $G$.
This means that from the~point of view of posterior sampling, two Q~function models are equivalent as long as they induce the~same distribution over greedy policies.
In what follows, we formalise this equivalence relationship (definition~\ref{def:incorp_dep}), and then show that each of the~induced equivalence classes contains a~model that does not propagate uncertainty (proposition~\ref{prop:prop_uncert_unnecessary}), implying that posterior sampling does \emph{not} rely on propagation of uncertainty.

\begin{definition}[Posterior sampling policy matching]\label{def:incorp_dep}
    For a~given distribution $\smash{P_{\hat{\mathcal{T}}}}$ and a~policy $\pi$, we say that a~model $\smash{P_{\hat{Q}^\pi}}$ matches the posterior sampling policy implied by $\smash{P_{\hat{\mathcal{T}}}}$ if $\smash{G_\# P_{\hat{Q}^\pi} = (G \circ F^\pi)_\# P_{\hat{\mathcal{T}}}}$.
\end{definition}

\begin{restatable}{prop}{propunnecessary}\label{prop:prop_uncert_unnecessary}
    For any distribution $\smash{P_{\hat{\mathcal{T}}}}$ and policy $\pi$ such that the~variance $\smash{\var_{F_\#^\pi P_{\hat{\mathcal{T}}}} [\hat{Q}^\pi (s, a)]}$ is greater than zero for some $(s, a)$, there exists a distribution $\smash{P_{\hat{Q}^\pi}}$ which \emph{matches the~posterior sampling policy} (definition~\ref{def:incorp_dep}), but does not \emph{propagate uncertainty} (definition~\ref{def:prop_uncert}), according to $\smash{P_{\hat{\mathcal{T}}}}$.
\end{restatable}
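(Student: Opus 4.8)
The plan is to use the observation made just before the statement: the posterior sampling procedure depends on $P_{\hat Q^\pi}$ only through $G_\# P_{\hat Q^\pi}$, the induced law over greedy policies. Hence, within the equivalence class of Definition~\ref{def:incorp_dep}, one may freely redistribute probability mass as long as the action of $G$ is preserved, and the task reduces to finding such a redistribution that corrupts one of the first two moments. I would start from the canonical representative $F_\#^\pi P_{\hat{\mathcal{T}}}$ of the class --- which matches the posterior sampling policy tautologically, and in fact propagates uncertainty because $F^\pi(\hat{\mathcal{T}})$ is by construction the Bellman fixed point for $\hat{\mathcal{T}}$ --- and perturb it by a map that commutes with the greedy operator $G$.

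The perturbation I would use is positive rescaling. Fix any $\lambda \in (0, \infty) \setminus \{1\}$, say $\lambda = 2$; let $m_\lambda \colon q \mapsto \lambda q$ act on $\bbR^{\states \times \actions}$; and set $P_{\hat Q^\pi} \coloneqq (m_\lambda)_\# F_\#^\pi P_{\hat{\mathcal{T}}}$. All moments appearing below are finite, since $\mathcal{R}$ is bounded and $\gamma < 1$, so every $\hat Q^\pi$ drawn from either distribution lies in a fixed bounded subset of $\bbR^{\states \times \actions}$. The key elementary fact is $G \circ m_\lambda = G$: for every $q$ and every state $s$, the set $\argmax_{a} \lambda\, q(s, a)$ equals $\argmax_{a} q(s, a)$, and any fixed tie-breaking convention of $G$ depends only on this set, so $G(\lambda q) = G(q)$.

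With this in hand, both requirements are immediate. For \emph{matching} (Definition~\ref{def:incorp_dep}), functoriality of the pushforward gives $G_\# P_{\hat Q^\pi} = (G \circ m_\lambda)_\# F_\#^\pi P_{\hat{\mathcal{T}}} = G_\# F_\#^\pi P_{\hat{\mathcal{T}}} = (G \circ F^\pi)_\# P_{\hat{\mathcal{T}}}$. For \emph{failure of propagation} (Definition~\ref{def:prop_uncert}), pick $(s^\star, a^\star)$ with $v \coloneqq \var_{F_\#^\pi P_{\hat{\mathcal{T}}}}[\hat Q^\pi(s^\star, a^\star)] > 0$, which exists by hypothesis. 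Under $P_{\hat Q^\pi}$, the variable $\hat Q^\pi(s^\star, a^\star)$ equals $\lambda$ times its counterpart under $F_\#^\pi P_{\hat{\mathcal{T}}}$, hence $\bbE_{P_{\hat Q^\pi}}[\hat Q^\pi(s^\star, a^\star)^2] = \lambda^2\, \bbE_{F_\#^\pi P_{\hat{\mathcal{T}}}}[\hat Q^\pi(s^\star, a^\star)^2]$; since $\bbE_{F_\#^\pi P_{\hat{\mathcal{T}}}}[\hat Q^\pi(s^\star, a^\star)^2] \ge v > 0$ while $\lambda^2 \neq 1$, the two sides differ, so the first equality of Definition~\ref{def:prop_uncert} fails at $(s^\star, a^\star)$ for $p = 2$. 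The positive-variance hypothesis is used precisely here: it makes the second moment nonzero, so that multiplication by $\lambda^2 \neq 1$ genuinely changes it.

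The only step requiring care, and the one I would flag as the main obstacle, is the identity $G \circ m_\lambda = G$ under whatever tie-breaking rule $G$ employs (lowest-index $\argmax$, a fixed distribution on the maximisers, and so on); but in every standard convention the maximiser set of $q(s, \cdot)$ is invariant under multiplication by a positive scalar, so the identity holds verbatim, and everything else is routine bookkeeping with pushforwards.
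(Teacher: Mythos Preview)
Your proof is correct. Both your argument and the paper's hinge on the same elementary observation---that the greedy operator $G$ is invariant under positive rescaling---and both use this to break the second-moment equality of Definition~\ref{def:prop_uncert} at the state-action pair of positive variance.

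The route differs in one respect. The paper first passes through an auxiliary construction: it extracts, for each state $s$, the categorical action probabilities $p_s$ induced by $(G \circ F^\pi)_\# P_{\hat{\mathcal{T}}}$, builds a fresh $\hat Q$ via the Gumbel-max trick ($\hat Q(s,a) = g_{sa} + \log p_{sa}$ with $g_{sa}$ i.i.d.\ standard Gumbel), and only then rescales by $b>0$ to force $b^2 \pi^2/6 \neq \var_{F_\#^\pi P_{\hat{\mathcal{T}}}}[\hat Q^\pi(s,a)]$. You instead rescale $F_\#^\pi P_{\hat{\mathcal{T}}}$ itself and compare second moments directly. Your approach is shorter and avoids the Gumbel machinery entirely; what the paper's detour buys is an explicit, fully factorised (independent across $(s,a)$) representative of the equivalence class, which may be of independent interest but is not needed for the proposition as stated.
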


We conclude by addressing a~potential criticism of proposition~\ref{prop:factorised_symm}, i.e.\ that the~described issues may be circumvented by initialising expected Q~values to a~value higher than the~maximal attainable Q~value in given MDP, an~approach known as optimistic initialisation \citep{osband2014generalization}.
In such case, symmetries in the Q~function may break as updates move the~distribution towards more realistic Q~values.
However, when neural network function approximation is used, the~effect of optimistic initialisation can disappear quickly with optimisation~\citep{osband2018randomized}.
In particular, with non-orthogonal state-action embeddings, Q~value estimates may decrease for yet unseen state-action pairs, and estimates for different state-action states can move in tandem.
In practice, most recent models employing neural network function approximation do not use optimistic initialisation \citep{osband2016deep,azizzadenesheli2018efficient,moerland2017efficient,o2017uncertainty}.

\section{Successor Uncertainties}\label{sec:implementation}
We present \emph{Successsor Uncertainties}, an~algorithm which both propagates uncertainty and matches the~posterior sampling policy.
As our work is motivated by PSRL, we focus on the~use with posterior sampling, leaving combination with other exploration algorithms for future research.

\subsection{Q~function model definition} 

Suppose we are given an~embedding function $\phi \colon \states \times \actions \to \bbR^d$, such that for all $(s,a)$, $\|\phi(s, a)\|_2 = 1$ and $\phi(s, a) \geq 0$ elementwise, and $\bbE_t R_{t+1} = \langle \phi_{t} ,w \rangle$ for some $w \in \bbR^d$. Denote $\phi_t = \phi(S_t, A_t)$. Then we can express $Q^\pi_t$ as an inner product of $w$ and $\psi^\pi_t = \bbE_t [\sum_{\tau=t}^{\infty} \gamma^{\tau -t} \phi_{\tau}]$, the (discounted) expected future occurrence of each $\smash{\phi(s,a)}$ feature under a policy $\pi$, as follows:
\begin{align}\label{eq:sf-deriv}
    Q^\pi_t 
    &= \E_t \sum_{\tau=t}^{\infty} \gamma^{\tau-t} R_{\tau+1} 
    = \E_t \sum_{\tau=t}^{\infty} \gamma^{\tau - t} \langle \phi_{\tau}, w \rangle
    = \biggl\langle \E_t \sum_{\tau=t}^{\infty} \gamma^{\tau-t} \phi_{\tau} , w \biggr\rangle 
    = \langle \psi^\pi_t, w \rangle 
    \,,
\end{align}
where the~second equality follows from the tower property of conditional expectation and the~third from the~dominated convergence theorem combined with the~unit norm assumption. 

The~quantity $\psi^\pi_t$ is known in the literature as the \emph{successor features}~\citep{dayan1993improving,barreto2017successor}. Noting that $\smash{\psi^\pi_t = \phi_{t} + \gamma \, \bbE_t \psi^\pi_{t+1}}$, an estimator of the successor features, $\smash{\hat{\psi}^\pi}$, can be obtained by applying standard temporal difference learning techniques. The other quantity involved, $w$, can be estimated by regressing embeddings of observed states $\phi_t$ onto the corresponding rewards. We perform Bayesian linear regression to infer a distribution over rewards, using $\mathcal{N} (0, \theta I)$ as the~prior over $w$ and $\mathcal{N}(\langle \phi, w \rangle, \beta)$ as the~likelihood, which leads to posterior $\smash{\mathcal{N}(\mu_w, \Sigma_w)}$ over $w$ with known analytical expressions for both $\mu_w$ and $\Sigma_w$. This~induces posterior distribution over $\smash{\hat{Q}_\textrm{SU}^\pi}$ given by 
\begin{equation}\label{eq:su_prob_dist}
    \hat{Q}_{\textnormal{SU}}^\pi \sim \mathcal{N} ( \hat{\Psi}^\pi \mu_w , \hat{\Psi}^\pi \Sigma_w (\hat{\Psi}^\pi)^\top)
    \, ,
\end{equation}
where $\smash{\hat{\Psi}^\pi = [\hat{\psi}^\pi(s,a)]_{(s,a) \in \states \times \actions}^\top}$. This is our Successor Uncertainties (SU) model for the Q~function.

The final element of the SU model is the selection of a sequence of reference policies $(\pi_i)_{i \geq 1}$ for which the Q~function model is learnt. We follow \citet{o2017uncertainty} in constructing these iteratively as $\smash{\pi_{i + 1} (s) = \E_{\hat{\pi} \sim G_\# P_{\hat{Q}^{\pi_i}}} [\hat{\pi}(s)]}$.

\subsection{Properties of the model}

The non-diagonal covariance matrix of the SU Q~function model (see equation~\eqref{eq:su_prob_dist}) means that SU does not suffer from the shortcomings of previous methods with factorised posterior distributions described in proposition~\ref{prop:factorised_symm}. Moreover, note that $\smash{\hat{Q}_{\textnormal{SU}}^\pi \sim F_{\#}^\pi P_{\hat{\mathcal{T}}}}$ for the~MDP model $P_{\hat{\mathcal{T}}}$ composed of a~delta distribution concentrated on empirical transition frequencies, and the~Bayesian linear model for rewards (assuming convergence of successor features, i.e.\ $\smash{\hat{\psi}^\pi = \psi^\pi }$). SU thus both propagates uncertainty and matches the~posterior sampling policy according to this choice of $P_{\hat{\mathcal{T}}}$. 

However, due to its use of a point estimate for the~transition probabilities, SU may underestimate Q~function uncertainty, and a~good model of transition probabilities which scales beyond tabular settings can lead to improved performance.
Furthermore, SU estimates  $\smash{P_{\hat{Q}^{\pi_{i + 1}}}}$ for a single policy, which we choose to be $\smash{\pi_{i + 1} (s) = \E_{\hat{\pi} \sim G_\# P_{\hat{Q}^{\pi_i}}} [\hat{\pi}(s)]}$. This approach may not adequately capture the uncertainty over $\hat{\pi}$ implied by $\smash{P_{\hat{Q}^{\pi_i}}}$. We expect that incorporation of this uncertainty, or an~improved method of choosing $\pi_{i + 1}$, may further improve the~SU algorithm.

\subsection{Neural network function approximation}\label{sec:nn_approximation}

One of the~main assumptions we made so far is that the~embedding function $\phi$ is known a~priori.
This section considers the~scenario where $\phi$ is to be estimated jointly with the~other quantities using neural network function approximation.
For reference, the~pseudocode is included in appendix~\ref{a:implementation}.

Let $\smash{\hat{\phi} \colon \mathcal{S} \times \mathcal{A} \to \bbR_+^d}$ be the current estimate of $\phi$, $(s_t, a_t)$ the~state-action pair observed at step $t$, $r_{t+1}$ the reward observed after taking action $a_t$ in state $s_t$.
Suppose we want to estimate the~Q~function of some given policy $\pi$, and denote $\smash{\hat{\phi}_t \coloneqq \hat{\phi}(s_t, a_t)}$, $\smash{\hat{\psi}_t \coloneqq \hat{\psi}^\pi (s_t, a_t)}$.
We propose to jointly learn $\smash{\hat{\phi}}$ and $\smash{\hat{\psi}}$ by enforcing the~known relationships between $\phi_t$, $\psi_t^\pi$ and $\bbE_t R_{t+1}$:
\begin{align}\label{eq:objective}
    &\textrm{min}_{
        \hat{\phi}, \hat{\psi}, \hat{w}
    }
    \,
        \underbrace{\| \hat{\psi}_t - \hat{\phi}_{t} - \gamma \, (\hat{\psi}_{t+1})^\dagger \|_2^2}_{\text{successor feature loss}}
        + 
        \underbrace{|\langle \hat{w}, \hat{\phi}_t \rangle \mkern-2mu - \mkern-2mu r_{t+1}|^2}_{\text{reward loss}}
        + 
        \underbrace{|\langle \hat{w}, \hat{\psi}_t \rangle \mkern-2mu - \mkern-2mu \gamma (\langle \hat{w}, \hat{\psi}_{t+1} \rangle)^\dag \mkern-3mu - r_{t+1}|^2 }_{\text{Q~value loss}}
\end{align}
in expectation over the~observed data $\smash{\{(s_t, a_t, r_{t+1} s_{t+1}) \colon t=0,\ldots,N\}}$ with $a_{t+1} \sim \pi(s_{t+1})$;
$\smash{\hat{\phi}_t, \hat{\psi}_t \in \bbR_+^d}, \smash{\|\hat{\phi}_t\|_2 = 1, \forall t, }$ are respectively ensured by the use of ReLU activations and explicit normalisation.
The~$\smash{\hat{w} \in \bbR^d}$ are the~final layer weights shared by the the~reward and the~Q~value networks. Quantities superscripted with~$\smash{\dagger}$ are treated as fixed during optimisation.

The need for the successor feature and reward losses follows directly from the definition of the~SU model.
We add the explicit Q~value loss to ensure accuracy of Q~value predictions.
Assuming that there exists a~(ReLU) network that achieves zero successor feature and reward loss, the~added Q~value loss has no effect.
However, finding such an~optimal solution is difficult in practice and empirically the~addition of the~Q~value loss improves performance.
Our modelling assumptions cause all constituent losses in equation~\eqref{eq:objective} to have similar scale, and thus we found it unnecessary to introduce weighting factors.
Furthermore, unlike in previous work utilising successor features \citep{kulkarni2016deep,machado2017eigenoption,machado2018count}, SU does not rely on any auxiliary state reconstruction or state-transition prediction tasks for learning, which simplifies implementation and greatly reduces the~required amount of computation.

We employ the neural network output weights $\hat{w}$ in prediction of the mean $Q$ function, and use the~Bayesian linear model only to provide uncertainty estimates.
In estimating the covariance matrix $\Sigma_w$, we decay the contribution of old data-points, $\smash{\hat{\Sigma}_w = (\zeta^{N}\theta^{-1}I + \beta^{-1} \sum_{i=0}^N \zeta^{N-i} \hat{\phi}_i \hat{\phi}_i^\top)^{-1}} \, , \zeta \in [0, 1]$, so as to counter non-stationarity of the learnt state-action embeddings $\smash{\hat{\phi}}\,$.

\subsection{Comparison to existing methods}\label{sec:existing}

We discuss two popular classes of Q~function models compatible with neural network function approximation: methods relying on Bayesian linear Q~function models and methods based on bootstrapping. We omit variational Q-learning methods such as \citep{gal2016uncertainty,lipton2016efficient}, as conceptual issues with these algorithms have already been identified in an~illuminating line of work by~\citet{osband2016deep,osband2018randomized}.

Bayesian linear Q~function models encompass our SU algorithm, UBE~\citep{o2017uncertainty} implemented with value function approximation, Bayesian Deep Q~Networks~\citep[BDQN;][]{azizzadenesheli2018efficient}, and a range of other related work \citep{levine2017shallow,moerland2017efficient}.
The algorithms within this category tend to use a~Q~function model of the form $\smash{\hat{Q}^\pi(s, a) = \langle \hat{\phi}^\pi_s, w_a \rangle}$, where $\smash{\hat{\phi}^\pi_s}$ are state embeddings and $\smash{w_a \sim P_{w_a}}$ are weights of a~Bayesian linear model.
The embeddings $\smash{\hat{\phi}^\pi_s}$ are produced by a~neural network, and are usually optimised using a~temporal difference algorithm applied to Q~values.
However, these methods do not enforce any explicit structure within the~embeddings $\smash{\hat{\phi}^\pi_s}$ which would be required for posterior sampling policy matching, and prevent these methods from falling victim to proposition~\ref{prop:factorised_symm}. SU can thus be viewed as a~simple and computationally cheap alternative fixing the~issues of existing Bayesian linear Q~function models. 

Bootstrapped DQN \citep{osband2016deep,osband2018randomized} is a~model which consists of an ensemble of $K$ standard Q~networks, each initialised independently and trained on a~random subset of the~observed data.
Each network is augmented with a fixed additive prior network, so as to ensure the ensemble distribution does not collapse in sparse environments. If all networks within the ensemble are trained to estimate the Q~function for a single policy $\pi$, then Bootstrapped DQN both propagates uncertainty and matches the~posterior sampling policy for a distribution over MDPs formed by the~mixture over empirical MDPs corresponding to each subsample of the~data.
In practice, Bootstrapped DQN does not assume a single policy $\pi$ and instead each network learns for its corresponding greedy policy. Bootstrapped DQN is, however, more computationally expensive: its performance increases with the size of the ensemble $K$, but so does the amount of computation required. Our experiments show that SU is much cheaper computationally, and that despite using only a single reference policy, it manages to outperform Bootstrapped DQN on a~wide range of exploration tasks (see section~\ref{sec:experiments}).

\section{Tabular experiments}
\label{sec:experiments}

We present results for: (i)~the~binary tree MDP accompanied by theoretical analysis showing how SU succeeds and avoids the~pitfalls identified in proposition~\ref{prop:factorised_symm}; (ii)~a~hard exploration task proposed by \citet{osband2018randomized} together with the~Boostrapped DQN algorithm which SU outperforms by a~significant margin.\footnote{Code for the~tabular experiments: \url{https://djanz.org/successor_uncertainties/tabular_code}}
We also provide an analysis explaining why some of the~previously discussed algorithms perform well on seemingly similar experiments present in existing literature.

\subsection{Binary tree MDP}\label{sec:tree-experiments}

We study the behaviour of SU and its competitors on the~binary tree MDP introduced in figure~\ref{fig:tree-mdp}. Figure~\ref{fig:scaling} shows the~empirical performance of each algorithm as a~function of the tree size~$L$. 
Evidently, both BDQN and UBE fail to outperform a~uniform exploration policy.
For UBE, this is a consequence of proposition~\ref{prop:factorised_symm}, and the~similarly poor behaviour of BDQN suggests it may suffer from an~analogous issue.
In contrast, SU and Bootstrapped DQN are able to succeed on large binary trees despite the very sparse reward structure and randomised action effects. 
However, Bootstrapped DQN requires approximately 25 times more computation than SU to approach similar levels of performance due to the~necessity to train a~whole ensemble of Q~networks.

\begin{figure}[h]
\includegraphics[width=\textwidth,clip,trim={7 8 7 7}]{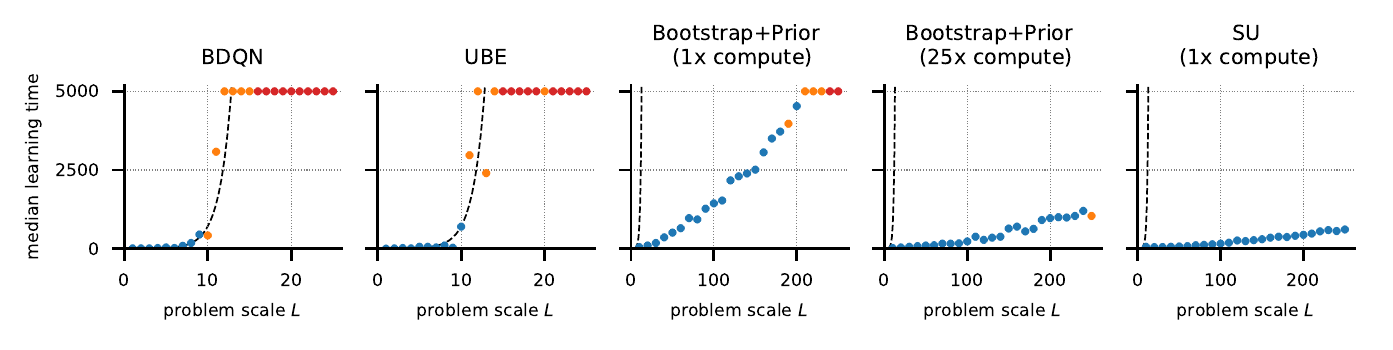}
\caption{Median number of episodes required to learn the optimal policy on the~tree MDP. Blue points indicate all 5 seeds succeeded within 5000 episodes, orange indicates only some of the~runs succeeded, and red all runs failed. Dashed lines correspond to the~median for a uniform exploration policy. Note the~reduced size of the x-axis for BDQN and UBE.}
\label{fig:scaling}
\end{figure}

The next proposition and its proof provide intuition for the success of SU on the~tree MDP.
The~proof is based on a~lemma stated just after the~proposition (see appendix~\ref{a:tree-proofs} for formal treatment).
\newcounter{suboundcounter}\setcounter{suboundcounter}{\value{prop}}
\setcounter{proposition}{\value{suboundcounter}}
\addtocounter{prop}{1}
\begin{proposition}[Informal statement]\label{prop:su-bound}
Assume the~SU model with: (i)~fixed one-hot state-action embeddings $\phi$, (ii)~uniform exploration thus far, (iii)~successor representations learnt to convergence for a uniform policy.
Let $s_{k}$ for $2 \leq k < 2L$, even, be a state visited $N$ times thus far.
Then the~probability of selecting $\aup$ in $s_k$, given $\aup$ was selected in $s_0 , s_2, \ldots , s_{k - 2}$, is greater than one half with probability greater than $1 - \epsilon_N$, where $\epsilon_N$ decreases exponentially with $N$. 
\end{proposition}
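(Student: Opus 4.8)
The plan is to work out the Successor Uncertainties Q-function posterior explicitly for the tree MDP under the stated assumptions, and then show that a sample from it selects $\aup$ at $s_k$ with probability exceeding one half with overwhelming probability in $N$. First I would write down the successor features: with one-hot state-action embeddings, $\hat\psi^\pi(s,a)$ is (up to the $\gamma$-discounting) the vector of expected discounted visitation counts of each $(s',a')$ starting from $(s,a)$ under the uniform policy $\pi$. On the tree this is a deterministic, easily computable object — from $(s_k,\aup)$ the chain proceeds up the tree, and at each subsequent even state the policy splits the mass in half between the up-subtree and the terminal odd state. So $\hat\psi^\pi(s_k,\aup)$ and $\hat\psi^\pi(s_k,\adown)$ differ in a structured way: the ``down'' action leads (with the random action relabelling) to a terminal odd state or continues, and crucially the support of the two successor-feature vectors is essentially disjoint except for the shared prefix. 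I would make precise which coordinates each one puts mass on, since this controls both the mean and the covariance of the induced Gaussian.

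Next I would bring in the reward posterior. The only nonzero reward sits at $s_{2L}$; with the Bayesian linear model, $\mu_w$ has a small positive component on the embedding of the (state,action) pair that reaches $s_{2L}$ (once that transition has been observed — which, under ``uniform exploration thus far'' conditioned on having reached $s_k$ via up-moves, I should address, perhaps by noting $s_k$ being visited $N$ times means the relevant statistics have accumulated; the formal appendix version presumably handles the exact conditioning). Then $\hat Q^\pi_{\mathrm{SU}}(s_k,a) = \langle \hat\psi^\pi(s_k,a), w\rangle$ with $w\sim\mathcal N(\mu_w,\Sigma_w)$, so the pair $(\hat Q(s_k,\aup),\hat Q(s_k,\adown))$ is jointly Gaussian. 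The event ``select $\aup$'' is $\hat Q(s_k,\aup) > \hat Q(s_k,\adown)$, i.e. $\langle \hat\psi^\pi(s_k,\aup)-\hat\psi^\pi(s_k,\adown), w\rangle > 0$, a univariate Gaussian event whose probability exceeds one half exactly when the mean difference $\langle \hat\psi^\pi(s_k,\aup)-\hat\psi^\pi(s_k,\adown),\mu_w\rangle$ is positive. So the proposition reduces to: (a) this mean difference is positive, driven by the $\gamma$-discounted reachability of $s_{2L}$ being larger from $(s_k,\aup)$; and (b) the whole statement is conditioned on an event about the data (which up-moves were taken, how the reward signal has propagated into $\mu_w$ after $N$ visits), and that conditioning event has probability $\ge 1-\epsilon_N$ with $\epsilon_N$ exponentially small — this is where a concentration/counting argument over the random action relabelling and the stochastic visitation counts enters.

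The key steps in order: (1) compute $\hat\psi^\pi(s_k,\aup)$ and $\hat\psi^\pi(s_k,\adown)$ in closed form on the tree under the uniform policy; (2) compute $\mu_w$ from the Bayesian linear regression given the observed rewards, identifying its sign pattern; (3) form the scalar $\Delta_k \coloneqq \langle \hat\psi^\pi(s_k,\aup)-\hat\psi^\pi(s_k,\adown),\mu_w\rangle$ and show $\Delta_k>0$ on a favourable data event $E_N$; (4) show $\bbP(E_N) \ge 1 - \epsilon_N$ with $\epsilon_N$ decaying exponentially, via a Chernoff/Hoeffding bound on the relevant visitation counts (the $N$ visits to $s_k$ force enough downstream observations); (5) conclude that conditionally on $E_N$, $\bbP_{\hat\pi\sim G_\#P_{\hat Q^\pi}}(\hat\pi(s_k)=\aup \mid \aup \text{ chosen in } s_0,\dots,s_{k-2}) = \Phi(\Delta_k/\sigma_k) > 1/2$.

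I expect the main obstacle to be step (4) together with pinning down precisely what ``conditioned on $\aup$ selected in $s_0,\dots,s_{k-2}$'' buys us in terms of which transitions have been observed and hence what $\mu_w$ looks like: the reward at $s_{2L}$ only influences $\mu_w$ on coordinates whose corresponding transitions have actually been sampled, so I need to argue that after $N$ visits to $s_k$ the reward information has propagated far enough down the tree (or, in the informal version, simply absorb this into the $\epsilon_N$ event). Subtler still is ruling out that the covariance structure of $\Sigma_w$ conspires to push $\sigma_k$ or cross-terms in a way that defeats the mean-sign argument — but since the decision event is univariate in $w$, only the sign of $\Delta_k$ matters and the covariance can only affect how far above $1/2$ the probability sits, not whether it exceeds $1/2$; so the covariance is harmless here, which is exactly the structural advantage over proposition~\ref{prop:factorised_symm}. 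The remaining calculations (the geometric sums for $\hat\psi^\pi$, the explicit $\mu_w$, the Hoeffding bound) are routine.
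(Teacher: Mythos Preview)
Your proposal has a genuine gap that would make the argument fail: you have misread the conditioning. The clause ``given $\aup$ was selected in $s_0,\dots,s_{k-2}$'' is a condition on the \emph{sampled greedy policy} $\hat\pi = G(\hat Q)$, i.e.\ on the single draw $w\sim\mathcal N(\mu_w,\Sigma_w)$, not on the observed data. Formally (see the appendix statement) the claim is about
\[
P_{\hat\pi}\bigl(\hat\pi(s_k)=\delta_{\aup}\mid \hat\pi(s_0)=\cdots=\hat\pi(s_{k-2})=\delta_{\aup}\bigr),
\]
which is $\mathbb P(\Delta_{k/2}>0\mid \Delta_0>0,\dots,\Delta_{k/2-1}>0)$ for the jointly Gaussian vector $\Delta_{j/2}=\hat Q(s_j,\aup)-\hat Q(s_j,\adown)$. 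Your step~(5) collapses this to the marginal $\Phi(\Delta_k/\sigma_k)$, which is not the same quantity; the conditioning on earlier coordinates of $\Delta$ cannot be dropped, and your remark that ``the covariance is harmless'' is precisely backwards here.

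This matters because in the regime the proposition addresses (exploration before the reward at $s_{2L}$ has been seen) every observed reward is zero, so with a zero-mean prior $\mu_w=0$ and your mean difference $\Delta_k$ vanishes identically; the marginal probability is exactly $1/2$, not greater. The paper's argument is driven entirely by the \emph{covariance} structure: lemma~\ref{lemma:covariance_condition} shows the conditional probability exceeds $1/2$ whenever $\cov{\hat Q(s_k,\aup),\hat Q(s_j,\aup)}>\cov{\hat Q(s_k,\adown),\hat Q(s_j,\aup)}$ for some even $j<k$, and under the uniform policy this is equivalent to $\bbV(\hat Q(s_k,\aup))>\bbV(\hat Q(s_k,\adown))$. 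The $1-\epsilon_N$ event is then that this variance inequality holds, which depends on the random split of the $N$ visits between $\aup$ and $\adown$ at $s_k$ (and further up the tree); Hoeffding on those binomial counts gives the exponential decay. So the Hoeffding step you anticipated is there, but it controls variances in $\Sigma_w$, not whether a nonzero reward has propagated into $\mu_w$.
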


\begin{lemma}[Informal statement]\label{lemma:covariance_condition}
Under the~SU model $\smash{\hat{Q} \sim P_{\hat{Q}^\pi}}$ for the~uniform policy $\pi$, the~probability that the~greedy policy $\hat{\pi} = \smash{G(\hat{Q})}$ selects $\aup$ in $s_k$, given $\aup$ was selected in $s_0 , s_2, \ldots , s_{k - 2}$, is greater than one half if there exists an~even $0 \leq j < k$ such that 
\begin{equation*}
    \cov{\qup{k}, \qup{j}} > \cov{\qdown{k}, \qup{j}}
    \, .
\end{equation*} 
\end{lemma}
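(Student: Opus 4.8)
The plan is to reduce the statement to a question about a centred Gaussian vector, read off the relevant covariances from the geometry of the tree, and then close the argument by conditioning out all of the randomness attached to the states visited before $s_k$.

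First I would unpack the SU model under the stated assumptions. With one-hot state--action embeddings and all observed rewards equal to zero (the case of interest, since uniform exploration has not yet reached $s_{2L}$), Bayesian linear regression for $w$ returns posterior mean $\mu_w = 0$ and a \emph{diagonal} posterior covariance $\Sigma_w = \mathrm{diag}(\sigma_c^2)_{c \in \states\times\actions}$ with every $\sigma_c^2 > 0$; hence, by~\eqref{eq:su_prob_dist} together with $\hat Q = \hat\Psi^\pi w$, the sampled $\hat Q$ is a centred Gaussian vector. Writing $X_j \coloneqq \qup{j} - \qdown{j} = \langle v_j, w\rangle$ with $v_j \coloneqq \psi^\pi(s_j,\aup) - \psi^\pi(s_j,\adown)$, the greedy policy $G(\hat Q)$ picks $\aup$ at $s_j$ exactly when $X_j > 0$ (ties have zero probability), so it suffices to show $\bbP(X_k > 0 \mid E) > \tfrac12$ where $E \coloneqq \{X_0 > 0,\ X_2 > 0,\ \dots,\ X_{k-2} > 0\}$.

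Second I would compute the successor features on the tree. Under the uniform policy, $\psi^\pi(s_j,\adown) = \phi(s_j,\adown)$ (the $\adown$-successor of $s_j$ is terminal), while unrolling $\psi^\pi(s_j,\aup) = \phi(s_j,\aup) + \gamma\,\psi^\pi(s_{j+2})$ gives
\[
    v_j \;=\; e_{(s_j,\aup)} - e_{(s_j,\adown)} \;+\; \sum_{m > j,\ m\ \mathrm{even}} \alpha_{jm}\bigl(e_{(s_m,\aup)} + e_{(s_m,\adown)}\bigr), \qquad \alpha_{jm} = (\gamma/2)^{(m-j)/2} > 0,
\]
with $\alpha_{im} = \alpha_{ij}\alpha_{jm}$ for $i < j < m$. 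Two consequences matter. (i)~Neither $\psi^\pi(s_k,\aup)$ nor $e_{(s_k,\adown)}$ has mass on any coordinate $(s_i,\cdot)$ with $i < k$, so $\cov{X_k, \qdown{i}} = 0$ and hence $\cov{X_k, X_i} = \cov{\qup{k},\qup{i}} - \cov{\qdown{k},\qup{i}}$; in particular the hypothesis is exactly ``$\cov{X_k, X_j} > 0$''. (ii)~Writing $G_m \coloneqq w_{(s_m,\aup)} - w_{(s_m,\adown)}$, $A_m \coloneqq w_{(s_m,\aup)} + w_{(s_m,\adown)}$, $\Xi \coloneqq \sum_{m > k,\ m\ \mathrm{even}} \alpha_{km} A_m$ and $T \coloneqq A_k + \Xi$, one reads off $X_j = \bigl(G_j + \sum_{j < m < k}\alpha_{jm} A_m\bigr) + \alpha_{jk} T$ for $j < k$ (the bracket depending only on the weights at $s_0,\dots,s_{k-2}$) and $X_k = G_k + \Xi$. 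A one-line covariance computation then yields $\cov{X_k, X_j} = \alpha_{jk}\bigl(\var[\Xi] + \sigma^2_{(s_k,\aup)} - \sigma^2_{(s_k,\adown)}\bigr)$, so the hypothesis is equivalent to $\beta \coloneqq \var[\Xi] + \sigma^2_{(s_k,\aup)} - \sigma^2_{(s_k,\adown)} > 0$.

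Third, the conditioning step. Condition on $\mathcal{J} \coloneqq \sigma\bigl(w_{(s_m,a)} : m \le k-2,\ a \in \actions\bigr)$. Given $\mathcal{J}$: the triple $(A_k,\Xi,G_k)$ is independent of $\mathcal{J}$; every $X_j$ with $j < k$ is a $\mathcal{J}$-measurable constant plus $\alpha_{jk} T$, so $E = \{T > \tilde\tau\}$ for a ($\mathcal{J}$-measurable, a.s.\ finite) threshold $\tilde\tau$; and $X_k = G_k + \Xi$, whose regression on $T$ has slope $\cov{X_k,T}/\var[T] = \beta/\var[T]$ and a strictly positive constant residual variance $v$ (positive because $\sigma^2_{(s_k,\aup)},\sigma^2_{(s_k,\adown)} > 0$). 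Hence $\bbP(X_k > 0 \mid T = t) = \Phi\!\bigl(\beta t/(\var[T]\sqrt v)\bigr)$, which --- \emph{because $\beta > 0$} --- is strictly increasing in $t$ and equals $\tfrac12$ at $t = 0$. Since $T$ is a centred Gaussian independent of $\mathcal{J}$, an elementary computation (split $\int_{\tilde\tau}^{\infty}$ at $0$ and use symmetry of the Gaussian density) shows $\bbE[g(T) \mid T > \tilde\tau] > 0$ for every finite $\tilde\tau$ and every odd, increasing $g$ with $g(0) = 0$; applying this to $g(t) = \Phi\!\bigl(\beta t/(\var[T]\sqrt v)\bigr) - \tfrac12$ gives $\bbP(X_k > 0 \mid E, \mathcal{J}) > \tfrac12$ for every value of $\mathcal{J}$, and averaging over $\mathcal{J}$ conditional on $E$ finishes the proof.

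The main obstacle is the bookkeeping in the last step: the weights at $s_k$ feed simultaneously into the conditioning variables $X_j$ (through $A_k$) and into the target $X_k$ (through $G_k$), and $G_k$ and $A_k$ are correlated precisely when the two actions at $s_k$ have been tried an unequal number of times, so one cannot simply treat $X_k$ as independent of $E$ given the ``deep'' weights. Conditioning out everything before $s_k$ and re-expressing $E$ through the \emph{single} scalar $T = A_k + \Xi$ circumvents this, after which positivity reduces to the regression slope of $X_k$ on $T$ being positive --- which is exactly what $\cov{\qup{k},\qup{j}} > \cov{\qdown{k},\qup{j}}$ asserts. Finally I would check the degenerate cases ($\gamma = 0$, a state tried with only one action, $\var[\Xi] = 0$ when $k = 2L-2$) and, for full rigour, confirm the ``all observed rewards vanish'' claim under the proposition's assumptions (a nonzero reward mean merely adds a non-negative drift to each $X_j$ and to $X_k$, which can only help).
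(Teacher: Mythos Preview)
Your proposal is correct and arrives at the same intermediate reduction as the paper --- the hypothesis for \emph{some} even $j$ is equivalent to a $j$-independent inequality (your $\beta>0$; the paper's $\var(\qup{k})>\var(\qdown{k})$), which is in turn equivalent to $\cov{X_k,X_j}>0$ for \emph{all} such $j$ --- but then diverges sharply from the paper in how the conditional probability bound is established. The paper abstracts away the tree entirely at this point and proves a standalone lemma: for any centred Gaussian vector $\Delta$ with $\cov{\Delta_d,\Delta_i}>0$ for all $i<d$, one has $\bbP(\Delta_d>0\mid\Delta_1>0,\ldots,\Delta_{d-1}>0)>1/2$; the proof is geometric, based on reflecting $\bbR^d$ across the hyperplane orthogonal to the row $v_d$ of $\Sigma^{1/2}$ and showing that the reflection-asymmetric part of the conditioning set lies entirely in $\{\langle v_d,x\rangle>0\}$ and has positive mass. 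Your route instead exploits the specific ``nested future'' structure of the tree: conditioning on the pre-$s_k$ weights collapses the $(k/2)$-dimensional event $E$ to a one-dimensional threshold $\{T>\tilde\tau\}$, after which a bivariate Gaussian regression and a short symmetry computation finish the job. Your argument is more elementary --- no reflections, no polytope construction, only bivariate Gaussians and the oddness of $t\mapsto\Phi(ct)-\tfrac12$ --- but it is tied to the fact that every $X_j$ decomposes as (something $\mathcal{J}$-measurable) $+$ $\alpha_{jk}T$, which is special to the linear chain of \aup states; the paper's reflection lemma is a reusable statement about arbitrary positively correlated centred Gaussians. Both proofs implicitly rely on the $X_j$ being centred (your caveat about observed rewards is well placed; the paper's auxiliary lemma simply asserts zero mean without comment).
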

\begin{proof}[Sketch proof of proposition~\ref{prop:su-bound}]
Under SU $\smash{\qup{j}} = \allowdisplaybreaks \hat{r}(s_j, \aup) + \allowdisplaybreaks \ldots + \allowdisplaybreaks \rho \qup{k} + \allowdisplaybreaks \smash{\rho \qdown{k}}$ with $\smash{\rho=2^{-(\frac{k-j}{2})}}$ the probability of getting from $s_j$ to $s_k$ under the uniform policy. 
Note that $\smash{\qup{j}}$ and $\smash{\qdown{k}}$ only share the~$\smash{\qdown{k} = \hat{r}(s_k, \adown)}$ term, whereas $\smash{\qup{k}}$ and $\smash{\qup{j}}$ share $\smash{\hat{r}(s_j, \aup),\ldots,\hat{r}(s_{p}, \adown)}$, where $s_p$ is the state with the~highest index seen so far.
Thus covariance between $\smash{\qup{k}}$ and $\smash{\qup{j}}$ is higher than that between $\smash{\qdown{k}}$ and $\smash{\qup{j}}$ with high probability (at least $1-\epsilon_N$), and the result follows from lemma~\ref{lemma:covariance_condition}.
\end{proof}
 
Proposition~\ref{prop:su-bound} implies that (at least under the simplifying assumption of prior exploration being uniform) SU is likely to assign higher probability to Q~functions for which a greedy policy leads towards the~furthest visited state (cf.\ the~role of the~state $s_p$ in the~sketch proof). This is a strategy actively aimed for in exploration algorithms such as Go-Explore where the agent uses imitation learning to return to the~furthest discovered states \citep{ecoffet2018montezuma}. 

\subsection{Chain MDP from \citep{osband2018randomized}}\label{sec:chain-experiments}

We present results on the chain environment introduced by \citet{osband2018randomized}, described in detail in appendix~\ref{a:chain-experiments}. \citeauthor{osband2018randomized} describe their MDP as being ``akin to looking for a piece of hay in a needle-stack'' and state that it ``may seem like an impossible task''. Figure~\ref{fig:scaling-log-log} shows the scaling for Successor Uncertainties and Bootstrap+Prior for this problem.
Learning time $T$ scales empirically as $\smash{\mathcal{O}(L^{2.5})}$ for SU, versus $\smash{\mathcal{O}(L^3)}$ for Bootstrap+Prior \citep[as reported in][]{osband2018randomized}.

\begin{figure}[H]
    \centering
    \includegraphics[width=\textwidth,clip,trim={5 5 5 5}]{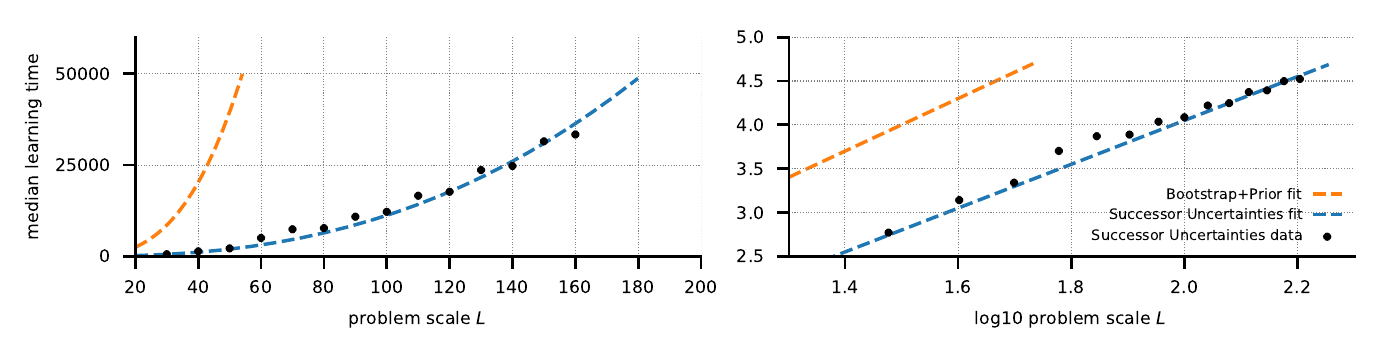}
    \caption{Learning time $T$ for SU and Bootstrap+Prior for a range of problem sizes $L$ on the chain MDP. Curve for SU is $\log_{10}T = 2.5\log_{10}L - 0.95$. Curve for Bootstrap+Prior is taken from figure~8 in \citep{osband2018randomized}. }
    \label{fig:scaling-log-log}
\end{figure}

\subsection{On the success of BDQN in environments with tied actions}\label{sec:tied-actions}
We briefly address prior results in the literature where BDQN is seen solving problems seemingly similar to our binary tree MDP with ease \citep[as in, for example, figure~1 of][]{touati2018randomized}. The~discrepancy occurs because previous work often does not randomise the effects of actions \citep[for~example][]{osband2016deep,plappert2017parameter,touati2018randomized}, i.e.\ if $a_1$ leads \aup in any state $s_k$, then $a_1$ leads \aup in all states. We refer to this as the~\emph{tied actions} setting.
In the following proposition, we show that MDPs with tied actions are trivial for BDQN with strictly positive activations (e.g.\ sigmoid).
We offer a similar result for ReLU in appendix~\ref{a:tied-action-proofs}.

\begin{restatable}{prop}{bdqnWinsTwo}\label{prop:bdqn-wins2}
Let $\smash{\hat{Q}(s, a) = \langle \phi(s), w_a \rangle}$ be a~Bayesian Q~function model with $\phi(s) = \varphi (U 1_s) \in \mathbb{R}^d$, $1_s$ a~one-hot encoding of $s$, and $\varphi$ a~strictly positive activation function (e.g.\ sigmoid) applied elementwise.
Then sampling independently from the~prior $w_{a} \sim \mathcal{N} (0 , \sigma_w^2 I )$, $U_{hs} \sim \mathcal{N} (0, \sigma_u^2)$ solves a~tied action binary tree of size $L$ in $\smash{T \leq - [\log_2(1-2^{-d})]^{-1}}$ median number of episodes.
\end{restatable}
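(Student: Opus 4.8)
The plan is to reduce the claim to a simple statement about a single Bernoulli-type event that is resampled independently each episode, and then read off the geometric-distribution bound. First I would observe that in the tied-actions setting there is a fixed (but unknown to us) identification of the two actions $\{a_1,a_2\}$ with the movements $\{\aup,\adown\}$ that is the \emph{same} in every state; without loss of generality say $a_1\mapsto\aup$. Because $\hat Q(s,a)=\langle\phi(s),w_a\rangle$ with $\phi(s)=\varphi(U1_s)$, the greedy policy picks $\aup$ in state $s$ iff $\langle\phi(s),w_{a_1}\rangle>\langle\phi(s),w_{a_2}\rangle$, i.e.\ iff $\langle\phi(s),w_{a_1}-w_{a_2}\rangle>0$. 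Set $v\coloneqq w_{a_1}-w_{a_2}$; since $w_{a_1},w_{a_2}$ are i.i.d.\ $\mathcal N(0,\sigma_w^2 I)$ and independent of $U$, the vector $v$ is $\mathcal N(0,2\sigma_w^2 I)$ and independent of $U$.

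Next I would exploit the crucial structural fact that $\phi(s)=\varphi(U1_s)\ge 0$ elementwise and (for a strictly positive activation such as sigmoid) in fact $\phi(s)>0$ elementwise, for \emph{every} state $s$ simultaneously, almost surely. Consider the event $E\coloneqq\{v>0 \text{ elementwise}\}$. On $E$ we have $\langle\phi(s),v\rangle>0$ for all states $s$ along the tree at once, so the sampled greedy policy plays $\aup$ in $s_0,s_2,\dots,s_{2L-2}$ and the agent reaches $s_{2L}$, solving the MDP within that single episode. Since the $d$ coordinates of $v$ are i.i.d.\ symmetric, $\bbP(E)=2^{-d}$; and crucially this probability does \emph{not} shrink with $L$, because a single sign pattern of the weight difference suffices for all depths thanks to positivity of the embeddings.

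Finally I would note that fresh $w_{a_1},w_{a_2},U$ are drawn at the start of each episode (posterior sampling from the prior, since we are analysing the prior itself), so the episodes are i.i.d.\ and the number of episodes until the first success is stochastically dominated by (in fact equal in distribution to, up to the possibility of succeeding via the complementary sign pattern, which only helps) a geometric random variable with success probability at least $p\coloneqq 2^{-d}$. Hence the median number of episodes $T$ satisfies $\bbP(T>m)\le(1-p)^m$, and the median is the smallest $m$ with $(1-p)^m\le\tfrac12$, giving $T\le\lceil -[\log_2(1-2^{-d})]^{-1}\rceil\le -[\log_2(1-2^{-d})]^{-1}$ after the appropriate rounding convention, matching the stated bound.

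The main obstacle, and the only place requiring care, is the transition from ``$v$ and all embeddings are fixed within an episode'' to ``the embeddings are simultaneously positive'': one must be sure that $\varphi>0$ guarantees $\phi(s)>0$ for the finitely many relevant states with probability one (immediate for sigmoid, since $\varphi(\mathbb R)\subset(0,1)$, so no measure-zero exclusion is even needed), and that the tied-actions assumption really does make the same sign condition work in every state rather than requiring a separate favourable event per state — this is exactly what fails in the randomised-action tree of Proposition~\ref{prop:factorised_symm} and is what makes the tied-action case trivial. The ReLU case (deferred to the appendix) is more delicate precisely because ReLU embeddings can have zero coordinates, so the clean ``$v>0$ elementwise'' event must be replaced by an argument that conditions on the support pattern of the $\phi(s)$.
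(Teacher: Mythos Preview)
Your proposal is correct and follows essentially the same route as the paper: define the difference vector $v=w_{\aup}-w_{\adown}\sim\mathcal N(0,2\sigma_w^2 I)$, use strict positivity of $\varphi$ to conclude that the single event $\{v>0\text{ elementwise}\}$ forces $\aup$ in every state simultaneously, note $\bbP(v>0)=2^{-d}$ independently of $L$, and read off the geometric median. The only slip is the final inequality chain: $\lceil x\rceil\ge x$, not $\le$, so the ceiling does not sit below the stated bound; both the paper and your argument are being informal about this rounding, but you should not write $\lceil x\rceil\le x$.
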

\begin{proof}
Define $\Delta \coloneqq w_{\aup} - w_{\adown}$ and observe $\aup$ is selected if $\smash{\hat{Q}(s, \aup) - \hat{Q}(s , \adown)} = \langle \phi(s) , w_{\aup} - w_{\adown} \rangle > 0$.
By strict positivity of $\varphi$, the~probability that \aup is always selected
\begin{equation*}
    \mathbb{P}\bigl[\bigcap_{j=0}^{L-1} \{ \smash{\hat{Q}(s_{2j}, \aup)} \! > \! \smash{ \hat{Q}(s_{2j}, \adown)} \} \bigr] \!\geq\! \mathbb{P}\bigl[ \bigcap_{j=0}^{L-1} \{ \langle \phi(s_{2j}), \Delta \rangle \! > \! 0 \} \mid \Delta \! > \! 0\bigr] \mathbb{P}(\Delta \! > \! 0) = \mathbb{P}(\Delta > 0) \, ,
\end{equation*}
where $\Delta > 0$ is to be interpreted elementwise. As $\Delta \sim \mathcal{N}(0, 2\sigma_w^2 I)$, $\mathbb{P}(\Delta > 0) = 2^{-d}$ for all $L$.
\end{proof}

A~single layer BDQN with one neuron can thus solve a~tied action binary tree of any size $L$ in one episode (median) while completely ignoring all state information.
That such an~approach can be successful implies tied actions MDPs generally do not make for good exploration benchmarks.

\section{Atari 2600 experiments}\label{sec:atari-experiments}
We have tested the SU algorithm on the~standard set of 49 games from the Arcade Learning Environment, with the aim of showing that SU can be scaled to complex domains that require generalisation between states. 
We use a standard network architecture as in \citep{mnih2015human,van2016deep} endowed with an extra head for prediction of $\smash{\hat{\phi}}$ and one-step value updates.
More detail on our implementation, network architecture and training procedure can be found in appendix~\ref{a:atari-experiments}.\footnote{Code for the~Atari experiments: \url{https://djanz.org/successor_uncertainties/atari_code}}

SU obtains a median human normalised score of 2.09 (averaged over 3 seeds) after 200M training frames under the `no-ops start 30 minute emulator time' test protocol described in \citep{hessel2018rainbow}. Table~\ref{tab:atari-statistics} shows we significantly outperform competing methods.
The raw scores are reported in table~\ref{tab:atari-raw-scores} (appendix), and the difference in human normalised score between SU and the~competing algorithms for individual games is charted in figure~\ref{fig:atari-relative}.
Since \citet{azizzadenesheli2018efficient} only report scores for a~small subset of the~games and use a~non-standard testing procedure, we do not compare against BDQN. \citet{osband2018randomized}, who introduce Bootstrap+Prior, do not report Atari results; we thus compare with results for the~original plain Bootstrapped DQN \citep{osband2016deep} instead.

\begin{table}[h]
    \centering
    \caption{\label{tab:atari-statistics} Human normalised Atari scores. Superhuman performance is the~percentage of games on which each~algorithm surpasses human performance \citep[as reported in][]{mnih2015human}.}
    \begin{tabular}{lx{4.5em}x{4.5em}x{4.5em}x{6em}}
            \toprule
            \multirow{2}{*}{Algorithm}
            & \multicolumn{3}{c}{Human normalised score percentiles}
            & Superhuman \\
            & 25\% & 50\% & 75\% & performance \% \\
            \midrule
            Successor Uncertainties & \textbf{1.06} & \textbf{2.09} & \textbf{5.95} & \textbf{77.55\%} \\
            Bootstrapped DQN & 0.76 & 1.60 & 5.16 & 67.35\%  \\
            UBE & 0.38 & 1.07 & 4.14 & 51.02\% \\
            DQN + $\epsilon$-greedy & 0.50 & 1.00 & 3.41 & 48.98\% \\
            \bottomrule
    \end{tabular}
\end{table}

\begin{figure}[t]
    \centering
    \includegraphics[width=1\textwidth,clip,trim={3 4 0 5}]{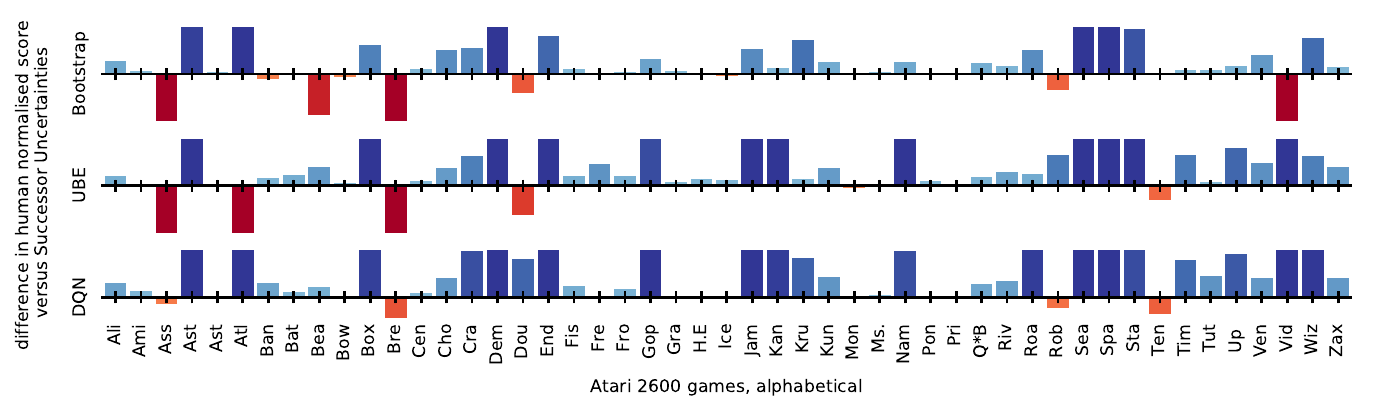}
    \caption{Bars show the difference in human normalised score between SU and Bootstrap DQN (top), UBE (middle) and DQN (bottom) for each of the 49 Atari 2600 games. Blue indicates SU performed better, red worse. SU outperforms the baselines on $36/49$, $43/49$ and $42/49$ games respectively. Y-axis values have been clipped to $[-2.5, 2.5]$.}
    \label{fig:atari-relative}
\end{figure}

\section{Conclusion}

We studied the Posterior Sampling for Reinforcement Learning algorithm and its extensions within the Randomised Value Function framework, focusing on use with neural network function approximation.
We have shown theoretically that exploration techniques based on the concept of propagation of uncertainty are neither sufficient nor necessary for posterior sampling exploration in sparse environments. 
We instead proposed posterior sampling policy matching, a~property motivated by the~probabilistic model over rewards and state transitions within the PSRL algorithm.
Based on the~theoretical insights, we developed Successor Uncertainties, a randomised value function algorithm that avoids some of the pathologies present within previous work. We showed empirically that on hard tabular examples, SU significantly outperforms competing methods, and provided theoretical analysis of its behaviour.
On Atari 2600, we demonstrated Successor Uncertainties is also highly effective when combined with neural network function approximation. 

Performance on the hardest exploration tasks often benefits greatly from multi-step temporal difference learning \citep{precup2000eligibility,munos2016safe,o2017uncertainty}
which we believe is the~most promising direction for improving Successor Uncertainties.
Since modification of existing models to incorporate Successor Uncertainties is relatively simple, other standard techniques used within model-free reinforcement learning like \citep{schaul2015prioritized,wang2015dueling} can be leveraged to obtain further gains.
This paper thus opens many exciting directions for future research which we hope will translate into both further performance improvements and a~more thorough understanding of exploration in modern reinforcement learning.

\subsubsection*{Acknowledgements}

We thank Matej Balog and the~anonymous reviewers for their helpful comments and suggestions. Jiri Hron acknowledges support by a~Nokia CASE Studentship.

\bibliography{neurips}
\bibliographystyle{style/icml2019}

\clearpage
\begin{appendices}
\begin{table}[htp]
    \centering
    \parbox{0.81\textwidth}{\caption{Raw scores for Successor Uncertainties alongside DQN, UBE and Bootstrap DQN . Test conditions: 30 minute emulator time limit and no-ops starting condition. Baselines as reported in \citep{hessel2018rainbow}.}\label{tab:atari-raw-scores}}
    \begin{tabular}{lcccc}
\toprule
Game & DQN & UBE & Bootstrap DQN & SU \\
\midrule
Alien & 1,620.0 & 3,345.3 & 2,436.6 & \textbf{6,924.4} \\
Amidar & 978.0 & 1,400.1 & 1,272.5 & \textbf{1,574.4} \\
Assault & 4,280.4 & \textbf{11,521.5} & 8,047.1 & 3,813.8 \\
Asterix & 4,359.0 & 7,038.5 & 19,713.2 & \textbf{42,762.2} \\
Asteroids & 1,364.5 & 1,159.4 & 1,032.0 & \textbf{2,270.4} \\
Atlantis & 279,987.0 & \textbf{4,648,770.8} & 994,500.0 & 2,026,261.1 \\
Bank Heist & 455.0 & 718.0 & \textbf{1,208.0} & 1,017.4 \\
Battle Zone & 29,900.0 & 19,948.9 & 38,666.7 & \textbf{39,944.4} \\
Beam Rider & 8,627.5 & 6,142.4 & \textbf{23,429.8} & 11,652.3 \\
Bowling & 50.4 & 18.3 & \textbf{60.2} & 38.3 \\
Boxing & 88.0 & 34.2 & 93.2 & \textbf{99.7} \\
Breakout & 385.5 & 617.3 & \textbf{855.0} & 352.7 \\
Centipede & 4,657.7 & 4,324.1 & 4,553.5 & \textbf{7,049.3} \\
Chopper Command & 6,126.0 & 7,130.8 & 4,100.0 & \textbf{15,787.8} \\
Crazy Climber & 110,763.0 & 132,997.5 & 137,925.9 & \textbf{171,991.1} \\
Demon Attack & 12,149.4 & 25,021.1 & 82,610.0 & \textbf{183,243.2} \\
Double Dunk & -6.6 & \textbf{4.7} & 3.0 & -0.2 \\
Enduro & 729.0 & 30.8 & 1,591.0 & \textbf{2,216.3} \\
Fishing Derby & -4.9 & 3.1 & 26.0 & \textbf{53.3} \\
Freeway & 30.8 & 0.0 & \textbf{33.9} & 33.8 \\
Frostbite & 797.4 & 546.0 & 2,181.4 & \textbf{2,733.3} \\
Gopher & 8,777.4 & 13,808.0 & 17,438.4 & \textbf{19,126.2} \\
Gravitar & 473.0 & 224.5 & 286.1 & \textbf{684.4} \\
H.E.R.O. & 20,437.8 & 12,808.8 & 21,021.3 & \textbf{22,050.8} \\
Ice Hockey & -1.9 & -6.6 & \textbf{-1.3} & -2.9 \\
James Bond & 768.5 & 778.4 & 1,663.5 & \textbf{2,171.1} \\
Kangaroo & 7,259.0 & 6,101.2 & 14,862.5 & \textbf{15,751.1} \\
Krull & 8,422.3 & 9,835.9 & 8,627.9 & \textbf{10,103.9} \\
Kung-Fu Master & 26,059.0 & 29,097.1 & 36,733.3 & \textbf{50,878.9} \\
Montezumas Revenge & 0.0 & \textbf{499.1} & 100.0 & 0.0 \\
Ms. Pac-Man & 3,085.6 & 3,141.3 & 2,983.3 & \textbf{4,894.8} \\
Name This Game & 8,207.8 & 4,604.4 & 11,501.1 & \textbf{12,686.7} \\
Pong & 19.5 & 14.2 & 20.9 & \textbf{21.0} \\
Private Eye & 146.7 & -281.1 & \textbf{1,812.5} & 133.3 \\
Q*Bert & 13,117.3 & 16,772.5 & 15,092.7 & \textbf{22,895.8} \\
River Raid & 7,377.6 & 8,732.3 & 12,845.0 & \textbf{17,940.6} \\
Road Runner & 39,544.0 & 56,581.1 & 51,500.0 & \textbf{61,594.4} \\
Robotank & 63.9 & 42.4 & \textbf{66.6} & 58.5 \\
Seaquest & 5,860.6 & 1,880.6 & 9,083.1 & \textbf{68,739.9} \\
Space Invaders & 1,692.3 & 2,032.4 & 2,893.0 & \textbf{13,754.3} \\
Star Gunner & 54,282.0 & 44,458.6 & 55,725.0 & \textbf{78,837.8} \\
Tennis & \textbf{12.2} & 10.2 & 0.0 & -1.0 \\
Time Pilot & 4,870.0 & 5,650.6 & 9,079.4 & \textbf{9,574.4} \\
Tutankham & 68.1 & 218.6 & 214.8 & \textbf{247.7} \\
Up and Down & 9,989.9 & 12,445.9 & 26,231.0 & \textbf{29,993.4} \\
Venture & 163.0 & -14.7 & 212.5 & \textbf{1,422.2} \\
Video Pinball & 196,760.4 & 51,178.2 & \textbf{811,610.0} & 515,601.9 \\
Wizard Of Wor & 2,704.0 & 8,425.5 & 6,804.7 & \textbf{15,023.3} \\
Zaxxon & 5,363.0 & 5,717.9 & 11,491.7 & \textbf{14,757.8} \\
\bottomrule
    \end{tabular}
\end{table}\clearpage

\section{Appendix to section~\ref{sec:rpi}: proofs of propositions~\ref{prop:factorised_symm} and~\ref{prop:prop_uncert_unnecessary}}
\label{a:pu_vs_ic_proofs}

\factorised*
\begin{proof}
    We can w.l.o.g.\ assume that the~distribution is symmetric around zero as centring will not affect validity of the~following argument. 
    To attain probability of taking a~particular action $a$ in state $s$ greater than $\tfrac{1}{2}$, it must be that $\mathbb{P} (a = \argmax_{a'} \hat{Q}(s, a') ) > \tfrac{1}{2}$.
    This event can be described as 
    $$A \coloneqq \bigcap_{a' \in \actions \setminus \{ a \} } \{ \hat{Q} \colon \hat{Q}(s, a) > \hat{Q}(s, a') \} \, ;$$ 
    by symmetry, the~event 
    $$\tilde{A} \coloneqq \bigcap_{a' \in \actions \setminus \{ a \} } \{ \hat{Q} \colon \hat{Q}(s, a) < \hat{Q}(s, a') \} \, ,$$
    must have the~same probability as $A$.
    Because $\mathbb{P}(A) + \mathbb{P}(\tilde{A}) \leq 1$, it must be that $\mathbb{P}(A) \leq \tfrac{1}{2}$.
    Since $\hat{Q}(s, a)$ is by assumption independent of any $\hat{Q}(s', a')$, $(s,a) \neq (s', a')$, the~probability of executing a~sequence of $L$ actions is at best (i.e.\ under deterministic transitions) the~product of probabilities of executing a~single action, which is upper bounded by $2^{-L}$.
\end{proof}

\propunnecessary*
\begin{proof}
    First, let us formally define $G \colon \bar{\bbR}^{\states \times \actions} \to \actions^\states$ to be the~function which maps each Q~function to the~corresponding greedy policy (we can w.l.o.g.\ assume there is some tie-breaking rule for when $\hat{Q}(s, a) = \hat{Q}(s, a'), a \neq a'$, e.g.\ taking the~action with smaller index).
    Here, $\bar{\bbR}$ is the~extended space of real numbers, and we assume the~Borel $\sigma$-algebra generated by the~usual interval topology; the~discrete $\sigma$-algebra is assumed on $\actions$.
    For product spaces, the~product $\sigma$-algebra is taken. 
    Given that the~pre-image of a~particular point $\hat{\pi} \in \actions^\states$ is $\bigcap_{s \in \states} \{ \hat{Q} \colon \hat{Q}( s, \hat{\pi}(s) ) \geq \hat{Q}(s, a) , \forall a \}$, $G$ is measurable and thus the~distribution $P_{\hat{\pi}} = G_{\#} P_{\hat{Q}}$ is well-defined for any $P_{\hat{Q}} \in \mathcal{P}(\bbR^{\states \times \actions})$, and in particular for $\smash{P_{\hat{Q}} = (G \circ F^\pi)_\# P_{\hat{\mathcal{T}}}}$ for any policy $\pi$.
    
    Our proof relies on the~following observation: if we sample $\hat{\pi} \sim P_{\hat{\pi}}$ and then use it to explore the~environment, the~distribution of actions taken in a~particular state $s \in \states$ will be categorical with parameter $p_s \in \{ p \in \bbR_+^{|\actions|} \colon \sum_{j=1}^{|\actions|} p_j = 1 \}$ (except for when the~state $s$ is reached with probability zero under $\smash{P_{\hat{\mathcal{T}}}}$ and $\smash{P_{\hat{\pi}}}$ in which case we can set $p_s$, for example, to $[1 / |\mathcal{A}|, \ldots, 1 / |\mathcal{A}|]^\top$ as this will not affect the~following argument).
    Hence to achieve $G_\# P_{\hat{Q}^\pi} = P_{\hat{\pi}}$, it is sufficient to construct a~model $\hat{Q} \sim P_{\hat{Q}^\pi}$ for which the~distribution of $\argmax_{a \in \actions} \hat{Q}(s, a)$ is categorical with the~parameter $p_s$ for all $s \in \states$. 
    We achieve this using the~Gumbel trick: sample $g_{sa} \sim \mathrm{Gumbel}(0, 1)$ independently for each $(s, a) \in \states \times \actions$, and set $\hat{Q}(s, a) = g_{sa} + \log p_{sa}$ (interpreting $\log 0 = - \infty$).
    
    To finish the~proof, observe that if the~inputs to the~$\argmax$ operator are all shifted by the~same amount, or multiplied by a~positive scalar, the~output remains unchanged.
    Hence taking $\hat{Q}'(s, a) = a + b \hat{Q}(s, a)$ for any $a \in \bbR, b > 0$ will also result in the~desired distribution over exploration policies. We can thus take the~$(s, a)$ for which $\var_{F_\#^\pi P_{\hat{\mathcal{T}}}} [\hat{Q} (s, a)] > 0$ and pick $b > 0$ so that $\var_{P_{\hat{Q}^\pi}} [\hat{Q} (s, a)] \neq \var_{F_\#^\pi P_{\hat{\mathcal{T}}}} [\hat{Q} (s, a)]$ which will be always possible as $\var(b \hat{Q}(s, a))$ is $b^2 \var(g_{sa}) = b^2 \frac{\pi^2}{6}$ if $p_{sa} > 0$ and is undefined otherwise. 
\end{proof}

\clearpage
\section{Appendix to section \ref{sec:experiments}}

\subsection{Proofs for section~\ref{sec:tree-experiments}}\label{a:tree-proofs}

In what follows, the~binary tree MDP of size $L$ introduced in figure~\ref{fig:tree-mdp} is assumed.
We further assume $\smash{\phi}$ is given and maps each state-action to its one-hot embedding.
As all of the~following arguments are independent of the~mapping from the~actions $\{ a_1, a_2 \}$ to the~movements $\{ \aup, \adown \}$, we use $\actions = \{ \aup, \adown \}$ directly for improved clarity. 

To prove lemma~\ref{lemma:covariance_condition}, we will need lemmas~\ref{lem:no_cov_down_up} to~\ref{lem:corr_implies_positivity} which we state and prove now.

\begin{lemma}\label{lem:no_cov_down_up}
After any number of posterior updates, the~SU reward distribution is multivariate normal with all rewards mutually independent.
Furthemore, under the~SU Q~function model $\smash{\hat{Q} \sim P_{\hat{Q}^\pi}}$ for any policy $\pi$, and even state indices $0 \leq j < k$ 
\begin{align*}
    &\cov{\qup{k}, \qdown{j}} = \cov{\qdown{k}, \qdown{j}} = 0
    \, .
\end{align*}
\end{lemma}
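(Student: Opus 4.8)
The first claim follows directly from the construction of the SU reward model. We perform Bayesian linear regression with a $\mathcal{N}(0, \theta I)$ prior on $w$ and Gaussian likelihood, and the embeddings $\phi$ are one-hot vectors for the tree MDP. Under a one-hot design matrix, the posterior over $w$ remains a product of independent univariate Gaussians: the posterior precision matrix $\theta^{-1} I + \beta^{-1} \sum_i \phi_i \phi_i^\top$ is diagonal because each $\phi_i \phi_i^\top$ is a diagonal matrix (a single one on the diagonal corresponding to the visited state-action). Hence $\hat{r}(s,a) = \langle \phi(s,a), w\rangle = w_{(s,a)}$, and the collection $\{\hat{r}(s,a)\}_{(s,a)}$ is jointly Gaussian with diagonal covariance, i.e.\ mutually independent. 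This establishes the first sentence.

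For the covariance identities, the plan is to expand $\smash{\hat{Q}^\pi(s,a)}$ in terms of rewards using the successor-feature representation. Since $\smash{\hat{\psi}^\pi}$ has converged, $\smash{\hat{Q}^\pi(s,a) = \langle \psi^\pi(s,a), w\rangle = \sum_{(s',a')} \psi^\pi(s,a)_{(s',a')}\, \hat{r}(s',a')}$, a linear combination of the independent rewards with coefficients given by discounted expected visitation counts. By bilinearity of covariance and mutual independence of the rewards, $\smash{\cov{\hat{Q}^\pi(s,a), \hat{Q}^\pi(s'',a'')} = \sum_{(s',a')} \psi^\pi(s,a)_{(s',a')}\, \psi^\pi(s'',a'')_{(s',a')}\, \var[\hat{r}(s',a')]}$. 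So the covariance is a weighted inner product of the two successor-feature vectors, and it suffices to show the relevant successor-feature vectors have disjoint supports.

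The key structural observation about the tree MDP is that from state $s_j$ (even, $0\le j<k$) the action \adown\ leads to the terminal odd-indexed state $s_{j-1}$ or, more relevantly here, that $s_j$ with action \adown\ transitions to a terminal state, so $\smash{\psi^\pi(s_j,\adown)}$ is supported only on $\{(s_j,\adown)\}$ (plus possibly the reward at the terminal successor, which is a distinct state-action pair never shared with anything reachable from $s_k$). Meanwhile, any state-action pair reachable from $s_k$ lies strictly deeper in the tree (indices $\ge k > j$), so $\smash{\psi^\pi(s_k,\aup)}$ and $\smash{\psi^\pi(s_k,\adown)}$ are supported entirely on state-action pairs with state index $\ge k$. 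Since $j < k$, the pair $(s_j,\adown)$ does not appear in the support of $\smash{\psi^\pi(s_k,\aup)}$ or $\smash{\psi^\pi(s_k,\adown)}$, and conversely. Hence the weighted inner products defining $\smash{\cov{\qup{k},\qdown{j}}}$ and $\smash{\cov{\qdown{k},\qdown{j}}}$ both have no overlapping support and vanish.

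The main obstacle is being careful about the tree's transition structure and the exact support of each successor-feature vector — in particular making precise that \adown\ from an even state $s_j$ goes to a \emph{terminal} state so its successor features never reach into the subtree below $s_k$, and confirming no reward shared between $s_j$'s \adown-branch and anything at depth $\ge k$. Once the disjoint-support claim is nailed down, the covariance identities are immediate from bilinearity and independence.
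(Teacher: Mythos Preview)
Your proposal is correct and follows essentially the same approach as the paper: both arguments establish independence of rewards from the diagonal posterior precision under one-hot embeddings, and both derive the vanishing covariances by observing that, since \adown\ always leads to a terminal state and $j<k$, the successor-feature vectors $\psi^\pi(s_j,\adown)$ and $\psi^\pi(s_k,\cdot)$ have disjoint support (i.e.\ share no reward terms). Your write-up is more explicit about the weighted-inner-product formula for the covariance, but the underlying reasoning is identical; the parenthetical about a possible ``reward at the terminal successor'' is unnecessary since no action is taken from terminal states, so $\psi^\pi(s_j,\adown)=\phi(s_j,\adown)$ exactly.
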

\begin{proof}
Inspecting equations~\eqref{eq:sf-deriv} and~\eqref{eq:su_prob_dist}, it is easy to see that neither $\qup{k}$ and $\qdown{j}$ nor $\qdown{k}$ and $\qdown{j}$ share any reward terms, since $j < k$ by assumption and the~empirical transition frequencies used to construct $\smash{P_{\hat{Q}^\pi}}$ will always be zero if the~true transition probability is zero (recall that \adown always terminates the~episode).
Hence assuming that the~successor features were successfully learnt, i.e.\ $\hat{\psi}^\pi = \psi^\pi$, it is sufficient to show that the~individual rewards are independent for SU.
To see that this is the~case, observe that the~assumed one-hot encoding of state-actions implies that SU reward distribution will be a~multivariate Gaussian with diagonal covariance after any number of updates which implies the~desired independence.
\end{proof}

\begin{lemma}\label{lem:delta-covariance}
Under the~SU model $\smash{\hat{Q} \sim P_{\hat{Q}^\pi}}$ for any policy $\pi$, the~random vector $\Delta$, $\Delta_{k/2} \coloneqq \qup{k} - \qdown{k}$, follows a zero mean Gaussian distribution with $\cov{\Delta_{k / 2}, \Delta_{j / 2}} = \cov{\qup{k}, \qup{j}} - \cov{\qdown{k}, \qup{j}})$ for any even indices $0 \leq j < k$.
\end{lemma}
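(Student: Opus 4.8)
The plan is to derive all three assertions of the lemma — joint Gaussianity, zero mean, and the covariance identity — directly from the definition of the SU Q-function model in equation~\eqref{eq:su_prob_dist} together with lemma~\ref{lem:no_cov_down_up}. Stack the relevant Q-values into the Gaussian vector $\hat{Q}^\pi$ of~\eqref{eq:su_prob_dist}. Since for each even $k$ the coordinate $\Delta_{k/2} = \qup{k} - \qdown{k}$ is a fixed linear functional of $\hat{Q}^\pi$, the vector $\Delta$ is an affine image of a multivariate Gaussian and is therefore itself jointly Gaussian. What remains is to identify its mean and covariance.

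For the covariance, fix even $0 \le j < k$ and expand by bilinearity of $\cov{\cdot,\cdot}$:
\begin{align*}
\cov{\Delta_{k/2},\Delta_{j/2}}
&= \cov{\qup{k},\qup{j}} - \cov{\qup{k},\qdown{j}} \\
&\quad - \cov{\qdown{k},\qup{j}} + \cov{\qdown{k},\qdown{j}} .
\end{align*}
Lemma~\ref{lem:no_cov_down_up}, which applies precisely because $j<k$ are both even, supplies $\cov{\qup{k},\qdown{j}} = \cov{\qdown{k},\qdown{j}} = 0$, and the two surviving terms are exactly the claimed expression.

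The only part requiring care is the zero-mean statement, and this is where I would concentrate the argument. Under the one-hot embedding, equation~\eqref{eq:sf-deriv} with converged successor features ($\hat{\psi}^\pi = \psi^\pi$) writes $\hat{Q}^\pi(s,a) = \langle \hat{\psi}^\pi(s,a), w\rangle$ as a nonnegatively weighted sum of the reward variables $\hat{r}(s',a') = w_{(s',a')}$ over the state-actions reachable from $(s,a)$, the weights being discounted expected visitation counts under $\pi$; hence $\E[\hat{Q}^\pi(s,a)] = \langle \hat{\psi}^\pi(s,a), \mu_w\rangle$ is the same combination of the reward posterior means. In the tree MDP the action \adown always yields immediate reward $0$ and then terminates, and the only transition in the whole MDP carrying a nonzero reward is $(s_{2L-2},\aup)\to s_{2L}$; in the regime relevant to proposition~\ref{prop:su-bound} — before the agent has ever reached $s_{2L}$, which is the only non-trivial case — every observed reward is $0$, so the Bayesian linear-regression posterior mean is $\mu_w = 0$. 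Consequently $\E[\hat{Q}^\pi(s,a)] = 0$ for every $(s,a)$, and in particular $\E[\Delta_{k/2}] = \E[\qup{k}] - \E[\qdown{k}] = 0$.

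I expect the main obstacle to be bookkeeping rather than depth: the Gaussianity is immediate and the covariance identity is a two-line consequence of lemma~\ref{lem:no_cov_down_up}, so the real work sits in the zero-mean step — tracking which state-actions actually enter $\hat{\psi}^\pi(s,a)$, checking that none of them carries an observed nonzero reward, and being explicit that the claim is to be read for the SU model instantiated before the reward at $s_{2L}$ has been encountered (otherwise $\Delta$ would have a nonnegative, not necessarily zero, mean, which would only reinforce the downstream conclusion of proposition~\ref{prop:su-bound}).
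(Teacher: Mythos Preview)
Your argument is correct and matches the paper's: Gaussianity from the linear (affine) image of a multivariate normal, and the covariance identity from bilinearity together with lemma~\ref{lem:no_cov_down_up}, exactly as the paper does. Where you go beyond the paper is the zero-mean claim --- the paper's proof simply does not address it --- and your observation that it hinges on $\mu_w = 0$ (i.e., on no nonzero reward having been observed, which is precisely the regime of proposition~\ref{prop:su-bound}) is a genuine clarification the paper leaves implicit; your parenthetical that a nonzero mean would only help the downstream conclusion is also correct.
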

\begin{proof}
The~Gaussianity of the~joint distribution of $\Delta_{j / 2}$ and $\Delta_{ k / 2 }$ follows from the~linearity property of multivariate normal distributions. For the~covariance, observe
\begin{align*}
    \cov{\Delta_{k / 2}, \Delta_{j / 2}} 
    &= 
    \cov{\qup{k} - \qdown{k}, \qup{j} - \qdown{j}} \\
    &=
    \begin{aligned}[t]
        & \cov{\qup{k}, \qup{j}} - \cov{\qdown{k}, \qup{j}} -
        \\
        & \cov{\qup{k}, \qdown{j}} + \cov{\qdown{k}, \qdown{j}}
    \end{aligned}
    \\
     & = \cov{\qup{k}, \qup{j}} - \cov{\qdown{k}, \qup{j}}) \, ,
\end{align*} 
where we used bilinearity of the~covariance operator and then applied lemma~\ref{lem:no_cov_down_up}.
\end{proof}

\begin{lemma}\label{lem:var-condition}
Under the~SU model $\smash{\hat{Q} \sim P_{\hat{Q}^\pi}}$ for the~uniform policy $\pi$, and even indices $0 \leq j < k$
\begin{align*}
    &&\cov{\qup{k}, \qup{j}} &> \cov{\qdown{k}, \qup{j}}) &&  \\
    \iff
    &&\bbV(\qup{k}) &> \bbV(\qdown{k}) \, . && \phantom{\iff}
\end{align*}
\end{lemma}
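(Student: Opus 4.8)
## Proof Proposal for Lemma~\ref{lem:var-condition}

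The plan is to express everything in terms of the shared reward terms and then read off the equivalence. Recall from the sketch proof of proposition~\ref{prop:su-bound} that under the uniform policy, $\qup{j}$ decomposes (up to the successor-feature-convergence assumption, i.e.\ $\hat\psi^\pi = \psi^\pi$) as a sum of independent reward terms with coefficients given by path probabilities: starting from $s_j$ and taking $\aup$, the walk reaches $s_k$ with probability $\rho = 2^{-(k-j)/2}$ under the uniform policy, and along the way it passes through $s_j, s_{j+2}, \ldots$ collecting the corresponding reward variables. By lemma~\ref{lem:no_cov_down_up} all individual SU reward terms are mutually independent Gaussians, so covariances between $Q$-values reduce to sums of $\rho$-weighted reward variances over \emph{shared} terms.

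First I would identify the shared terms. The quantity $\qup{k}$ and $\qup{j}$ share exactly the reward terms on the common portion of their paths, namely all reward variables accumulated from $s_j$ onwards along the ``always-\aup'' trajectory up to the furthest visited state $s_p$ (terms beyond $s_p$ have zero empirical frequency, hence zero variance, and contribute nothing). By contrast, $\qdown{k}$ and $\qup{j}$ share \emph{only} the term $\rdown{k} = \hat r(s_k, \adown)$, which $\qup{j}$ picks up at the very end of its path to $s_k$. Writing $v_m \coloneqq \bbV[\hat r(s_m, \adown)] \ge 0$ for the variance of the terminal-reward term at an odd-or-max index, and collecting the $\rho$-factors, one obtains
\begin{equation*}
    \cov{\qup{k}, \qup{j}} - \cov{\qdown{k}, \qup{j}}
    =
    c \cdot \bigl( \bbV(\qup{k}) - \bbV(\qdown{k}) \bigr)
\end{equation*}
for a strictly positive constant $c$ depending only on $k, j$ and $\gamma$ — precisely because the path from $s_j$ to $s_k$ under the uniform policy ``factors through'' the path from $s_k$ onwards, so the shared-term contribution to $\cov{\qup{k}, \qup{j}}$ is exactly the $\rho$-discounted version of $\bbV(\qup{k})$, and likewise the single shared term $\rdown{k}$ contributes the $\rho$-discounted version of $\bbV(\qdown{k})$ to $\cov{\qdown{k}, \qup{j}}$. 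Since $c > 0$, the stated equivalence follows immediately.

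The main obstacle, I expect, is bookkeeping rather than anything deep: one must carefully track which reward variables appear in each $Q$-value, handle the fact that the successor features are only assumed learnt (so that the $Q$-value is genuinely the $\gamma$-discounted sum with exact path-probability coefficients), and verify that the ``leftover'' portion of $\qup{j}$'s path — namely the segment from $s_0$ up to $s_j$ and any terms between $s_k$ and $s_p$ — either does not appear in $\qup{k}$, $\qdown{k}$ or cancels when taking the difference. The key identity making the constant $c$ clean is that under the uniform policy the walk is Markovian, so $\psi^\pi$ restricted past $s_j$ equals $\rho$ times $\psi^\pi$ restricted past $s_k$ on the shared coordinates; I would make this precise by writing $\hat\Psi^\pi$ explicitly on the relevant rows and invoking lemma~\ref{lem:delta-covariance} to re-use the difference $\cov{\qup{k}, \qup{j}} - \cov{\qdown{k}, \qup{j}} = \cov{\Delta_{k/2}, \Delta_{j/2}}$, reducing the claim to showing $\cov{\Delta_{k/2}, \Delta_{j/2}}$ has the same sign as $\bbV(\Delta_{k/2})$ via the positivity of the path coefficient.
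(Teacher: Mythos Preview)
Your proposal is correct and takes essentially the same approach as the paper: both arguments expand $\qup{j}$ recursively under the uniform policy, use independence of the reward terms (lemma~\ref{lem:no_cov_down_up}) and bilinearity to conclude $\cov{\qup{k},\qup{j}} = \rho\,\bbV(\qup{k})$ and $\cov{\qdown{k},\qup{j}} = \rho\,\bbV(\qdown{k})$ with $\rho = 2^{-(k-j)/2}$, whence the equivalence. The paper's write-up is simply more direct---it writes the expansion out in one line and reads off the constant---whereas your version circles through shared-term bookkeeping and lemma~\ref{lem:delta-covariance}; note also that the shared portion is from $s_k$ onwards (not $s_j$), and the reference to $s_p$ is unnecessary here.
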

\begin{proof}
Analogously to the~proof of lemma~\ref{lem:delta-covariance}, we see that under the~uniform policy
\begin{align*}
    &&&\mathrm{Cov}(\qup{k}, \qup{j})
    \nonumber \\
    &= \,
    &&\mathrm{Cov} (
        \qup{k}, \,
        \rup{j} + 2^{-1} \rup{j + 2} + \ldots + 2^{- (\frac{k - j}{2})} \qup{k}
    )
    \nonumber \\
    &=
    &2^{- (\frac{k - j}{2})} \, &\mathrm{Cov} (\qup{k}, \qup{k})
    =
    2^{- (\frac{k - j}{2})} \, \var(\qup{k})
    \, ,
\end{align*}
where the~$2^{-l}$ terms correspond to the~probability of getting to $s_l$ from $(s_j, \aup)$, $l = 1, 2, \ldots, \frac{k - j}{2}$, and we used bilinearity of the~covariance operator and then applied lemma~\ref{lem:no_cov_down_up}.
An~analogous argument yields $\cov{\qdown{k}, \qup{j}} = 2^{- (\frac{k - j}{2})} \var (\qdown{k})$, concluding the~proof.
\end{proof}

\begin{lemma}\label{lem:corr_implies_positivity}
For a~$d$-dimensional centred Gaussian random vector $\Delta \sim \mathcal{N} (0, \Sigma)$ with $\cov{\Delta_d, \Delta_i} > 0$ for all $i = 1, \ldots, d - 1$, the~following bound holds: $\mathbb{P} ( \Delta_d > 0 \mid \Delta_1 > 0 , \ldots , \Delta_{d-1} >0 ) > 1 / 2$.
\end{lemma}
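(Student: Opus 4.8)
The plan is to reduce the claim to a statement about the conditional mean of a Gaussian. Write $\Delta = (\Delta', \Delta_d)$ where $\Delta' = (\Delta_1, \dots, \Delta_{d-1})$, and note that by the usual Gaussian conditioning formula, the conditional law of $\Delta_d$ given $\Delta'$ is normal with mean $\Sigma_{d,1:d-1}\Sigma_{1:d-1}^{-1}\Delta'$ and some fixed (deterministic) variance. So $\mathbb{P}(\Delta_d > 0 \mid \Delta') > 1/2$ precisely when this conditional mean is positive, i.e.\ when $\langle b, \Delta' \rangle > 0$ where $b^\top = \Sigma_{d,1:d-1}\Sigma_{1:d-1}^{-1}$. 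Hence it suffices to show $\mathbb{P}(\langle b, \Delta' \rangle > 0 \mid \Delta' > 0) > 1/2$, the conditioning now being on the event $\{\Delta_i > 0, \ i=1,\dots,d-1\}$.

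The key observation is a sign/symmetry argument. The distribution of $\Delta'$ is centred Gaussian, hence symmetric about the origin; conditioning on the orthant $\{\Delta' > 0\}$ gives a distribution supported on the positive orthant. The hypothesis $\cov{\Delta_d, \Delta_i} > 0$ for every $i$ does \emph{not} immediately say the regression coefficients $b_i$ are all positive (that would need a further condition on $\Sigma_{1:d-1}^{-1}$), so the naive ``positive combination of positive things'' argument is not quite available. Instead I would argue by coupling/symmetry directly on $\Delta$: consider the reflection $\Delta \mapsto (\Delta', -\Delta_d)$, which preserves $\{\Delta' > 0\}$ but flips the sign of $\Delta_d$; if $\Delta$ and its reflection were equal in law we'd get exactly $1/2$, and the strict positivity of the covariances is what tilts the balance. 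Concretely, I would write the density of $\Delta$ restricted to the half-space and compare it to its reflection, or use the fact that $\mathbb{P}(\Delta_d>0,\Delta'>0) - \mathbb{P}(\Delta_d<0,\Delta'>0)$ has a sign determined by the off-diagonal block $\Sigma_{d,1:d-1}$; positivity of each entry makes this difference strictly positive via an integration-by-parts / Slepian-type inequality, which yields the claim after dividing by $\mathbb{P}(\Delta'>0)$.

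An alternative and perhaps cleaner route: induct on $d$, or appeal to a correlation inequality for Gaussians. The event $\{\Delta_d > 0\}$ and the event $\{\Delta' > 0\} = \bigcap_{i<d}\{\Delta_i > 0\}$ are both increasing (upper sets) in the coordinates, and under a Gaussian law with \emph{nonnegative} off-diagonal covariances one has positive association (the Gaussian FKG / Pitt's inequality), giving $\mathbb{P}(\Delta_d>0 \mid \Delta'>0) \geq \mathbb{P}(\Delta_d>0) = 1/2$. To get the \emph{strict} inequality one then uses that at least one covariance $\cov{\Delta_d,\Delta_i}$ is strictly positive, so the association inequality is strict. This reduces the whole lemma to invoking Pitt's theorem plus a strictness check, which I would verify by noting the correlation inequality is an equality only in the independent case.

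The main obstacle is handling strictness and, if one avoids citing Pitt's inequality, making the symmetry/reflection argument rigorous when the regression coefficients $b_i$ need not all be positive. I expect the cleanest writeup is the Gaussian-association one: state that $\Delta$ has nonnegative off-diagonal entries (which follows since the diagonal of $\Sigma$ is positive and the hypothesis controls the last row; for the writeup one restricts attention to the relevant submatrix, or simply works coordinatewise), apply the positive-correlation inequality to the two increasing events, and then argue strictness from the single strictly-positive covariance. The routine Gaussian-conditioning computation that opens the proof is standard and I would not belabour it.
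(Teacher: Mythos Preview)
Your preferred route via Pitt's theorem has a genuine gap. Pitt's characterisation says a centred Gaussian is associated if and only if \emph{all} off-diagonal covariances are nonnegative; the hypothesis here only controls the last row, $\cov{\Delta_d,\Delta_i}>0$ for $i<d$, and says nothing about $\cov{\Delta_i,\Delta_j}$ for $i,j<d$, which may well be negative. So you cannot simply ``state that $\Delta$ has nonnegative off-diagonal entries''---it need not---and Pitt does not apply. The parenthetical about restricting to a submatrix does not rescue this: the event $\{\Delta'>0\}$ involves all of $\Delta_1,\dots,\Delta_{d-1}$ simultaneously, so their mutual covariances are relevant to whether $\mathbb{1}\{\Delta'>0\}$ and $\mathbb{1}\{\Delta_d>0\}$ are positively correlated via an association argument.

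Two fixes are available and each is close to something you already wrote. First, the Slepian comparison you gesture at in your first route actually does the job cleanly: compare $\Delta$ to $\tilde\Delta$ obtained by zeroing the last row/column of $\Sigma$ off the diagonal; the variances match and only the entries $\Sigma_{di}$ change, all strictly upward, so Slepian gives $\mathbb{P}(\Delta>0)>\mathbb{P}(\tilde\Delta>0)=\tfrac12\,\mathbb{P}(\Delta'>0)$, with strictness from the strict covariance increase. Second, your Gaussian-conditioning idea works if you reverse the roles: condition on $\Delta_d=t$ rather than on $\Delta'$. The conditional law of $\Delta'$ is $\mathcal{N}(\Sigma_{1:d-1,d}\,\Sigma_{dd}^{-1}t,\,C)$ with $C$ fixed, and since every entry of $\Sigma_{1:d-1,d}$ is positive the conditional mean is coordinatewise strictly increasing in $t$; hence $t\mapsto\mathbb{P}(\Delta'>0\mid\Delta_d=t)$ is strictly increasing, and symmetry of the marginal of $\Delta_d$ gives the strict inequality immediately.

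The paper itself takes your reflection route and makes it rigorous by passing to the whitened representation $\Delta_i=\langle v_i,X\rangle$ with $X\sim\mathcal{N}(0,I)$ and $v_i$ the $i$th row of $\Sigma^{1/2}$. The relevant reflection is $R_{v_d}$, reflection through $v_d^\perp$: it preserves the law of $X$ and flips the sign of $\Delta_d$, but does \emph{not} preserve the conditioning set $E=\{\langle v_i,X\rangle>0\ \forall i<d\}$. One then splits $E$ into $A_+=E\cap R_{v_d}[E]$ (symmetric part, contributes exactly $\tfrac12$) and $A_-=E\setminus R_{v_d}[E]$, and the positive-covariance hypothesis $\langle v_d,v_i\rangle>0$ is used to show that every $x\in A_-$ has $\langle v_d,x\rangle>0$; finally one checks $A_-$ has positive measure. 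This is somewhat more hands-on than either Slepian or the condition-on-$\Delta_d$ argument, but it is self-contained and does not invoke any named inequality.
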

\begin{proof}
Notice that $\Delta$ and $\Sigma^{1/2} X$, $X \sim \mathcal{N} (0, 1)$, are equal in distribution which allows us to set $\Delta_i = \langle v_i, X \rangle$, with $v_i \in \mathbb{R}^d$ the~$i$\textsuperscript{th} row of $\Sigma^{1/2}$.
Let $R_v \colon \mathbb{R}^d \to \mathbb{R}^d$ be the~reflection against the~orthogonal complement of $v$, i.e.
$$R_v(x) = x - 2 \frac{\langle x , v \rangle}{\langle v, v \rangle} v \, .$$
It is easy to see that $\langle v , R_v (x) \rangle = - \langle v, x \rangle$ and consequently $R_v(R_v(x)) = x$.
The~main idea of this proof is to partition $\mathbb{R}^d$ into the~half-spaces $\{ x \colon \langle v_i , x \rangle > 0 \}$ and $\{ x \colon \langle v_i, R_{v_d} (x) \rangle > 0 \}$, $i = 1, \ldots, d - 1$, and reason about the~value $\langle v_d , x \rangle$ takes in each.

First, we define the~conditioning set $E \coloneqq \{ x \colon \langle v_i, x \rangle  > 0 \, , \forall i = 1, \ldots , d - 1 \}$ and observe that $\mathbb{P} (X \in E) > 0$ so all we need to prove is $\E [ \mathbbm{1}_{\langle v_d , X \rangle > 0} \mathbbm{1}_E  ] > \E [ \mathbbm{1}_{\langle v_d , X \rangle \leq 0} \mathbbm{1}_E ]$, where $\mathbbm{1}_E$ is the~indicator function of the~set $E$.
To do so, we define $U \coloneqq  \{ x \colon \langle v_i, R_{v_d} (x) \rangle > 0 \, , \forall i = 1, \ldots , d - 1 \}$, $A_+ \coloneqq E \, \cap \, U$, $A_- \coloneqq E \, \cap \, U^\mathsf{c}$, split the~integral $\int_{E} \mathbbm{1}_{\{ \langle v_d, X \rangle > 0 \}} (x) \phi (x) \, \mathrm{d} x$ into $\int_{A_+} \mathbbm{1}_{\{ \langle v_d, X \rangle > 0 \}}(x) \phi (x) \, \mathrm{d} x + \int_{A_-} \mathbbm{1}_{\{ \langle v_d, X \rangle > 0 \}}(x) \phi (x) \, \mathrm{d} x$ ($\phi$~is the~standard normal density function; analogously for $\mathbbm{1}_{\{ \langle v_d, X \rangle \leq 0 \}}$), and consider $X \in A_+$ and $X \in A_-$ separately:

{(I)}~$X \in A_+$: Take any $x , v \in \mathbb{R}^d$ and define the~orthogonal projection map on $v$, $B_v \coloneqq v v^\top / \| v \|_2^2$, and the~corresponding projections of $x$, $x_{v} \coloneqq B_v x \, , x_{v}^\bot = (I - B_v) x$, so that $x = x_{v} + x_{v}^\bot$.
Since
$$\| x \|_2^2 = \| x_v + x_v^\bot \|_2^2 = \| x_v \|_2^2 + \| x_v^\bot \|_2^2 = \| -x_v + x_v^\bot \|_2^2 = \| R_v (x) \|_2^2 \, ,$$
it follows that $\phi(x) = \phi(R_{v_d}(x))$.
Noticing further that $R_{v_d}(x) = (I - 2 B_{v_d}) x$ and recalling $R_{v_d} (R_{v_d} (x)) = x$, we have $|\det \nabla_x R_{v_d} (x)| = |-1| = 1$.
The~crucial observation here is $\langle x , v_d \rangle > 0 \iff \langle x_{v_d}, v_d \rangle > 0$, $\langle x, v_d \rangle \leq 0 \iff \langle R_{v_d} (x) , v_d \rangle > 0$ (up to null sets), and that $A_+ = R_{v_d} [ A_+ ] = \{ R_{v_d}(x) \colon x \in A_+ \}$ which follows from the~definition of the~set $A_+$.
In particular this means that whenever $x \in A_+$ then also $-x \in A_+$, and thus by the~above established symmetry and the~change of variable formula, 
$\int_{A_+} \mathbbm{1}_{\{ \langle v_d, X \rangle > 0 \}} (x) \phi(x) \, \mathrm{d} x = \int_{A_+} \mathbbm{1}_{\{ \langle v_d, X \rangle \leq 0 \}}(x) \phi(x) \, \mathrm{d} x$, i.e.~the~conditional probabilities of both $A_+ \, \cap \, \{ \langle v_d, X \rangle > 0 \}$ and $A_+ \, \cap \, \{ \langle v_d, X \rangle \leq 0 \}$ are equal.

(II)~$X \in A_-$: Notice that for any $i = 1 , \ldots , d - 1$
$$\langle v_i , R_{v_d} (x) \rangle = \langle v_i , x \rangle - 2 \frac{\langle v_d, x \rangle}{\| v_d \|_2^2} \langle v_d, v_i \rangle \, .$$
Hence if $\langle v_d, x \rangle \leq 0$ then $\langle v_i , R_{v_d} (x) \rangle \geq \langle v_i , x \rangle > 0$ from the~definition $\langle v_d , v_i \rangle = \cov{\Delta_d, \Delta_i}$ and the~assumption $\cov{\Delta_d, \Delta_i} > 0$.
Now by the~definition of $U$ in $A_- = E \, \cap \, U^{\mathsf{c}}$, for any $x \in A_-$, there must exist $i \in \{ 1 , \ldots , d - 1 \}$ such that $\langle v_i , R_{v_d} (x) \rangle \leq 0$ which implies $\langle v_d , x \rangle > 0$ by the~above argument.
It is thus sufficient to establish $\mathbb{P} (X \in A_-) > 0$ to complete the~proof as the~intersection $A_- \, \cap \, \{ \langle v_d, X \rangle \leq 0 \}$ is empty.

Since $\langle v_d, v_i \rangle = \cov{\Delta_d, \Delta_i} > 0$, $v_d \in E$ and $\langle v_i , R_{v_d} (v_d) \rangle = - \langle v_i , v_d \rangle < 0 \, , \forall i = 1, \ldots, d - 1$, we have $v_d \in A_-$.
We can thus construct a~convex polytope $V \subseteq A_-$ such that $\mathbb{P}(X \in V) > 0$. 
Specifically, pick some $i \in \{ 1, \ldots, d - 1 \}$, for example $i = \argmax_{i \in \{1, \ldots, d-1 \}} \langle v_d, v_i \rangle$, and set $\kappa \coloneqq \max_{k, l \in \{1, \ldots , d \}} |\langle v_k, v_l \rangle| = \max_{k \in \{ 1, \ldots, d\} } \| v_k \|_2^2  > 0$.
Now define
$$V \coloneqq  \{x \colon x = u + v_d + \sum_{j = 1}^{d-1} \alpha_j v_j \, , \alpha_j \in [0,  \tfrac{\langle v_d, v_i \rangle}{\kappa (d - 1)} ) \, , u \in \mathrm{span} (v_1, \ldots, v_d)^\bot \} \, ,$$
where $\mathrm{span} (v_1, \ldots, v_d)^\bot$ is the~orthogonal complement of the~linear span of the~vectors $(v_1, \ldots , v_d)$.
Clearly $V \subseteq E$ as for any $x \in V$, $\langle v_i , x \rangle > 0$ from the~bound on the~coefficients $\alpha$.
To see that $x \in V \implies x \in U^\mathsf{c}$, note
\begin{equation*}
    \langle v_i, R_{v_d} (x) \rangle
    =
    -
    \langle v_i, v_d \rangle
    +
    \sum_{j=1}^{d-1}
        \underbrace{\alpha_j}_{\geq 0} \biggl[
            \langle v_i , v_j \rangle
            -
            2 \underbrace{
                \frac{\langle v_d , v_i \rangle}{\| v_d \|_2^2}
                \langle v_j , v_d \rangle
            }_{> 0}
        \biggr]
    \, .
\end{equation*}
Since the~first and last terms are strictly negative, we just need to control the~second term.
We again apply the~definition of $V$ to bound $\sum_j \alpha_j \langle v_i, v_j \rangle < \langle v_i, v_d \rangle$ which implies $\langle v_i, R_{v_d}(x) \rangle < 0$ for every $x \in V$.
Thus $V \subseteq A_-$ and because $V$ has non-zero volume, its probability under $\mathcal{N}(0 , I)$ will be positive.
Hence $\int_{A_-} \mathbbm{1}_{\{ \langle v_d, X \rangle > 0 \}} (x) \phi(x) \, \mathrm{d} x > \int_{A_-} \mathbbm{1}_{\{ \langle v_d, X \rangle \leq 0 \}} \phi(x) \, \mathrm{d} x = 0$.
\end{proof}

We are now ready to prove lemma~\ref{lemma:covariance_condition}.
\newcounter{lemmabkup}
\setcounter{lemmabkup}{\value{lemma}}
\setcounter{lemma}{\value{suboundcounter}}
\stepcounter{lemma}
\begin{lemma}[Formal statement]
Let $\smash{\hat{\pi} \sim P_{\hat{\pi}} = G_\# P_{\hat{Q}^\pi}}$ where $\smash{\hat{Q} \sim P_{\hat{Q}^\pi}}$ is the~SU model for the~uniform policy $\pi$.
For $2 \leq k < 2L$ even, define $\smash{U_k = \{ \hat{\pi} \colon \hat{\pi}(s_0) = \ldots = \hat{\pi}(s_{k - 2}) = \delta_{\aup} \}}$ where $\smash{\delta_{\aup}}$ is the~policy of selecting $\aup$ with probability one.
Then $\smash{P_{\hat{\pi}} (\hat{\pi} (s_k) = \delta_{\aup} \mid \hat{\pi} \in U_k) > 1 / 2 }$ if there exists an~even $0 \leq j < k$ such that 
$
    \cov{\qup{k}, \qup{j}} > \cov{\qdown{k}, \qup{j}}
    \, .
$ 
\end{lemma}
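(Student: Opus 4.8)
The plan is to translate the statement into a question about the jointly Gaussian vector of advantage variables $\Delta_m \coloneqq \qup{2m} - \qdown{2m}$ supplied by lemma~\ref{lem:delta-covariance}, and then apply the Gaussian orthant inequality of lemma~\ref{lem:corr_implies_positivity}. First I would note that each up-path state $s_{2m}$ has only the two actions $\aup$ and $\adown$, and the SU model $P_{\hat{Q}^\pi}$ has continuous marginals, so ties have probability zero and the greedy policy $\hat{\pi} = G(\hat{Q})$ satisfies $\hat{\pi}(s_{2m}) = \delta_{\aup}$ iff $\Delta_m > 0$ almost surely. Reading $U_k$ as conditioning on the even-indexed up-path states $s_0, s_2, \dots, s_{k-2}$ (consistent with the informal statement of the lemma; the odd-indexed states are terminal and off the path), the conditioning event $\{\hat{\pi} \in U_k\}$ becomes $\{\Delta_0 > 0, \dots, \Delta_{(k-2)/2} > 0\}$ and the target event is $\{\Delta_{k/2} > 0\}$, so it suffices to show
\begin{equation*}
    \bbP\bigl(\Delta_{k/2} > 0 \,\big|\, \Delta_0 > 0, \dots, \Delta_{(k-2)/2} > 0\bigr) > \tfrac12 .
\end{equation*}

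The one genuinely non-mechanical step is upgrading the single-index hypothesis into a form that lemma~\ref{lem:corr_implies_positivity} can consume. The assumption gives $\cov{\qup{k}, \qup{j}} > \cov{\qdown{k}, \qup{j}}$ for \emph{one} even $j < k$; by lemma~\ref{lem:var-condition} this is equivalent to $\bbV(\qup{k}) > \bbV(\qdown{k})$, a condition in which $j$ no longer appears, and invoking the reverse implication of lemma~\ref{lem:var-condition} for every even $j' < k$ then yields $\cov{\qup{k}, \qup{j'}} > \cov{\qdown{k}, \qup{j'}}$ for all of them. By lemma~\ref{lem:delta-covariance} (which rests on lemma~\ref{lem:no_cov_down_up}) this is precisely $\cov{\Delta_{k/2}, \Delta_i} > 0$ for every $i \in \{0, 1, \dots, (k-2)/2\}$. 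With this in place I would apply lemma~\ref{lem:corr_implies_positivity} to the centred Gaussian vector $(\Delta_0, \dots, \Delta_{(k-2)/2}, \Delta_{k/2})$, putting $\Delta_{k/2}$ in the last coordinate (the lemma is insensitive to the order of the remaining coordinates), which delivers the displayed inequality and hence the claim.

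The main obstacle is thus not the logical skeleton but the measure-theoretic hygiene needed to invoke lemma~\ref{lem:corr_implies_positivity}. One must check that the conditioning event has strictly positive probability, i.e.\ that each $\Delta_i$ has positive variance, which I would obtain from the non-degenerate $\mathcal{N}(0, \theta I)$ reward prior, which keeps the reward terms entering $\qup{2i}$ and $\qdown{2i}$ from being linearly dependent; and one must confirm that lemma~\ref{lem:corr_implies_positivity} does not tacitly require the covariance matrix of the $\Delta$-vector to be invertible --- its proof uses only a matrix square root and positivity of an orthant probability, so it does not. A minor additional point is justifying the reading of $U_k$ used above, which reduces to observing that the greedy action at the terminal odd-indexed states is irrelevant to reachability of $s_k$.
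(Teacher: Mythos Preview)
Your proposal is correct and follows essentially the same route as the paper: translate the greedy-policy events into orthant events for the centred Gaussian vector $(\Delta_0,\dots,\Delta_{k/2})$ via lemma~\ref{lem:delta-covariance}, use lemma~\ref{lem:var-condition} to upgrade the single-$j$ hypothesis to all even $j<k$, convert this to $\cov{\Delta_{k/2},\Delta_i}>0$ for all $i$, and finish with lemma~\ref{lem:corr_implies_positivity}. The paper's proof is the same chain of equivalences stated more tersely; your additional remarks on ties, positivity of the conditioning event, and non-degeneracy are sound refinements that the paper leaves implicit.
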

\setcounter{lemma}{\value{lemmabkup}}

\begin{proof} 
Under $P_{\hat{\pi}}$, $G(\hat{Q}) = \delta_{\aup}$ iff $\Delta_{k/2} = \hat{Q}(s_k , \aup) - \hat{Q}(s_k, \adown) > 0$. 
By lemma~\ref{lem:delta-covariance}, the~distribution of the~random vector $\Delta = [\Delta_0, \Delta_1, \ldots , \Delta_{k/2}]^\top$ is a~zero mean Gaussian, and in particular
$$P_{\hat{\pi}} (\hat{\pi} = \delta_\aup \mid \hat{\pi} \in U_k) = \mathbb{P} (\Delta_{k / 2} > 0 \mid \Delta_{0} > 0 , \ldots , \Delta_{k / 2 - 1} > 0 )  \, .$$
To prove the~desired claim, we therefore need to show that existence of even $0 \leq j < k$ such that
$\smash{
    \cov{\qup{k}, \qup{j}} > \cov{\qdown{k}, \qup{j}}
    \, ,
}$ 
implies $\mathbb{P} (\Delta_{k / 2} > 0 \mid \Delta_{0} > 0 , \ldots , \Delta_{k / 2 - 1} > 0 ) > 1 / 2$.
The~statement follows from:
\begin{align*}
    & \cov{\qup{k}, \qup{j}} > \cov{\qdown{k}, \qup{j}} \, \text{, for some even } 0 \leq j < k \\
    & \overset{\text{lemma~\ref{lem:var-condition}}}{\iff}
    \cov{\qup{k}, \qup{j}} > \cov{\qdown{k}, \qup{j}} \, \text{, for all even } 0 \leq j < k \\
    & \overset{\text{lemma~\ref{lem:delta-covariance}}}{\iff}
    \cov{\Delta_{k/2}, \Delta_{j / 2}} > 0 \, \text{, for all even } 0 \leq j < k \\
    & \overset{\text{lemma~\ref{lem:corr_implies_positivity}}}{\iff}
    \mathbb{P} (\Delta_{k / 2} > 0 \mid \Delta_{0} > 0 , \ldots , \Delta_{k / 2 - 1} > 0 ) > 1 / 2
    \, .
\end{align*}
\end{proof}


\setcounter{proposition}{\value{suboundcounter}}
\begin{proposition}[Formal statement]
Assume the~SU model with: (i)~one-hot state-action embeddings~$\phi$, (ii)~uniform exploration thus far, (iii)~successor representations learnt to convergence for a uniform policy.
For $2 \leq k < 2L$ even, let $s_{k}$ be a state visited $N$ times thus far, and $\pi$, $\smash{\hat{Q} \sim P_{\hat{Q}^\pi}}$, $\smash{\hat{\pi} \sim P_{\hat{\pi}}}$, and $U_k$ be defined as in lemma~\ref{lemma:covariance_condition}.
Then 
$$\smash{P_{\hat{\pi}} (\hat{\pi} (s_k) = \delta_{\aup} \mid \hat{\pi} \in U_k)  > P_{\hat{\pi}} (\hat{\pi} (s_k) = \delta_{\adown} \mid \hat{\pi} \in U_k) }\, , $$
with probability greater than $1 - \epsilon_N$, where $\epsilon_N < 0.75^N e^{-\frac{N}{50}} + (1 - 0.75^N)e^{- 0.175N}$.
\end{proposition}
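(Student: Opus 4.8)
The plan is to convert the posterior‑level statement into a statement about the visit counts accumulated during uniform exploration, and then control those counts with Chernoff bounds. \textbf{Reduction.} Apply the formal version of Lemma~\ref{lemma:covariance_condition} with $j = 0$ (which is even and $< k$): since $\actions = \{\aup, \adown\}$ and $\Delta_{k/2} \neq 0$ almost surely, $P_{\hat\pi}(\hat\pi(s_k) = \delta_\aup \mid \hat\pi \in U_k) > 1/2$ — equivalently, strictly larger than $P_{\hat\pi}(\hat\pi(s_k) = \delta_\adown \mid \hat\pi \in U_k)$ — whenever $\cov{\qup{k}, \qup{0}} > \cov{\qdown{k}, \qup{0}}$, and by Lemma~\ref{lem:var-condition} this is in turn equivalent to $\bbV(\qup{k}) > \bbV(\qdown{k})$. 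It therefore suffices to show that, over the data gathered under uniform exploration,
\[
\bbP\bigl( \bbV(\qup{k}) \le \bbV(\qdown{k}) \bigr) \;<\; 0.75^{N} e^{-N/50} + (1 - 0.75^{N})\, e^{-0.175 N} .
\]

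\textbf{The two variances.} Before the goal is first reached every observed reward is zero, so by Lemma~\ref{lem:no_cov_down_up} the SU reward posterior is a zero‑mean diagonal Gaussian whose $(s, a)$ marginal has variance $v(n_{sa}) \coloneqq \theta\beta/(\beta + n_{sa}\theta)$, where $n_{sa}$ is the number of times $(s, a)$ has been played; $v$ is strictly decreasing with $v(0) = \theta$. Unrolling the uniform‑policy Bellman recursion along the up‑spine gives $\qdown{k} = \rdown{k}$ and $\qup{k} = \sum_{i \ge 0}(\gamma/2)^i \rup{k+2i} + \sum_{i \ge 1}(\gamma/2)^i \rdown{k+2i}$, so independence of rewards (Lemma~\ref{lem:no_cov_down_up}) yields
\[
\bbV(\qdown{k}) = v(n_d), \qquad \bbV(\qup{k}) = v(n_u) + \sum_{i \ge 1}(\gamma/2)^{2i}\bigl[ v(n^{\aup}_{k+2i}) + v(n^{\adown}_{k+2i}) \bigr],
\]
where $n_d, n_u$ are the down‑ and up‑counts at $s_k$ (so $n_u + n_d = N$) and $n^{a}_{l}$ is the visit count of $(s_l, a)$. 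Under uniform exploration these counts cascade: $n_u \sim \mathrm{Bin}(N, \tfrac12)$, the number of visits to $s_{k+2}$ equals $n_u$ and splits as $n^{\aup}_{k+2} \sim \mathrm{Bin}(n_u, \tfrac12)$ and $n^{\adown}_{k+2}$, and so on up the spine.

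\textbf{Case split.} Assume $k \le 2L - 6$ (the two shallowest even states are trivial, as going beyond them forces reaching the goal). Let $E \coloneqq \{ s_{k+4} \text{ is never visited} \}$; reaching $s_{k+4}$ requires two consecutive up‑moves out of $s_k$, an event of probability $\tfrac14$ per visit, so $\bbP(E) = (3/4)^{N}$. On $E$, every one of the $n_u$ visits to $s_{k+2}$ took $\adown$ and the whole subtree above $s_{k+2}$ is unvisited; hence $\bbV(\qup{k}) \ge (\gamma/2)^2 v(0) = (\gamma^2/4)\theta$, so $\bbV(\qup{k}) > \bbV(\qdown{k})$ as soon as $v(n_d) < (\gamma^2/4)\theta$, i.e. as soon as $n_d$ exceeds an absolute constant; conditionally on $E$ one has $n_d \sim \mathrm{Bin}(N, \tfrac23)$, so a Chernoff bound gives $\bbP(\bbV(\qup{k}) \le \bbV(\qdown{k}) \mid E) \le e^{-N/50}$ (the finitely many small $N$, for which this is weaker than triviality, checked by hand). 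On $E^{c}$ the counts $n_d, n_u, n^{\adown}_{k+2}, \dots$ concentrate around $N/2, N/2, N/4, \dots$; retaining a few leading terms of $\bbV(\qup{k})$ and applying Chernoff bounds to these binomial counts shows that $\bbV(\qup{k}) \le \bbV(\qdown{k})$ forces $n_d$ below a fixed fraction $< \tfrac12$ of $N$, an event of probability $\le e^{-0.175 N}$. Combining the two branches, weighted by $\bbP(E)$ and $\bbP(E^{c})$, gives the stated bound on $\epsilon_N$.

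\textbf{Main obstacle.} The delicate regime is $n_u > n_d$ inside $E^{c}$, where $v(n_u) < v(n_d)$ and so the leading term of $\bbV(\qup{k})$ alone does not beat $\bbV(\qdown{k})$: one must lower‑bound the deeper up‑spine variance terms — which are positive and, since the visit counts merely halve at each level, only geometrically smaller — carefully enough, and trade this against the fluctuations of $n_d$ about $N/2$, to extract the precise exponent $0.175$. The remaining work (the explicit Chernoff computations and the direct checks for small $N$ and for the shallow states) is routine.
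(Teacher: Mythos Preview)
Your high-level plan coincides with the paper's: reduce via lemma~\ref{lemma:covariance_condition} and lemma~\ref{lem:var-condition} to the event $\{\bbV(\qup{k}) > \bbV(\qdown{k})\}$, express both variances through the visit counts, and split on whether the agent has ever reached two levels above $s_k$. Your event $E=\{s_{k+4}\text{ never visited}\}$ is exactly the paper's $\{N_2=0\}$ (with $N_2$ the number of episodes in which both $(s_k,\aup)$ and $(s_{k+2},\aup)$ were played), and both of you compute $\bbP(E)=0.75^N$.

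Where you diverge is in how each branch is handled. On $E$, the paper plugs $N_2=0$ into the two-level truncation of $\bbV(\qup{k})$ to get $\Upsilon_1=\tfrac{3}{2}\nu(N_1)+\tfrac{1}{2}\nu(0)$ and shows $\Upsilon_1>\nu(N-N_1)$ whenever $N_1<\tfrac{3N}{5}$; this condition is \emph{parameter-free}, so a single Hoeffding bound on $N_1$ (which is $\mathrm{Bin}(N,\tfrac13)$ conditionally on $E$, hence certainly dominated by the unconditional $e^{-N/50}$) finishes the branch for every $N$. Your bound $\bbV(\qup{k})\ge(\gamma/2)^2\theta$ yields instead a threshold for $n_d$ that depends on $\gamma,\theta,\beta$; the ``finitely many small $N$ checked by hand'' cannot be uniform in those hyperparameters, so as stated your $E$-branch does not deliver $e^{-N/50}$ for all admissible models.

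On $E^{\mathsf c}$ the gap is sharper. The paper does \emph{not} concentrate the deeper counts. It takes the worst case $N_3=0$ to obtain a deterministic lower bound $\Upsilon_2=\nu(N_1)+\nu(N_2)+\tfrac12\nu(N_1-N_2)$, and then shows algebraically that $\Upsilon_2>\nu(N-N_1)$ holds for \emph{every} $N_2\ge 1$ precisely when $N_1\le cN$ with $c=\tfrac{1}{41}(27+4\sqrt 2)$. This collapses the whole branch to a single Hoeffding bound on $N_1\sim\mathrm{Bin}(N,\tfrac12)$, and $2(c-\tfrac12)^2=\tfrac{(13+8\sqrt 2)^2}{3362}>0.175$ is exactly where the constant comes from. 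Your proposal---concentrate several of the cascading binomial counts and union-bound---is a different mechanism; it can produce \emph{some} exponential rate, but it will not deliver the specific $0.175$ without effectively redoing the paper's uniform-in-$N_2$ algebra. The ``main obstacle'' you flag is real, and the paper's resolution is precisely this worst-case-over-deeper-counts reduction to a condition on $N_1$ alone, not a multi-count Chernoff argument.
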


\begin{proof}
By lemma~\ref{lemma:covariance_condition}, we know that
$\smash{P_{\hat{\pi}} (\hat{\pi} (s_k) = \delta_{\aup} \mid \hat{\pi} \in U_k)  > P_{\hat{\pi}} (\hat{\pi} (s_k) = \delta_{\adown} \mid \hat{\pi} \in U_k) }$ holds if $\cov{\qup{k}, \qup{j}} > \allowdisplaybreaks \cov{\qdown{k}, \qup{j}}$ for some $j = 0, 2, \ldots, k - 2$.
By lemma~\ref{lem:var-condition}, this condition is equivalent to requiring $\bbV(\qup{k}) > \bbV(\qdown{k})$.
Our approach is thus based on lower bounding the~probability of the~event
\begin{equation}\label{eq:var_cond}
    \{ \hat{Q} \colon \bbV(\qup{k}) > \bbV(\qdown{k}) \} \, . 
\end{equation}
The~rest of the~proof is divided into two stages:
\begin{enumerate}[label=(\Roman*),itemsep=2pt,topsep=0pt]
    \item We derive a~crude bound $\smash{\Upsilon}_1(\qup{k}) \leq \bbV(\qup{k})$ and compute a lower bound on the probability of the~event $\smash{\Upsilon}_1(\qup{k}) > \bbV(\qdown{k})$.
    \item We then derive a tighter lower bound $\smash{\Upsilon}_2(\qup{k})$, and again compute a lower bound on the probability of the~event $\smash{\Upsilon}_2(\qdown{k}) > \bbV(\qdown{k})$.
\end{enumerate}

{(I)}~The bound $\smash{\Upsilon_1(\qup{k}) \leq \bbV(\qup{k})}$ will correspond to a worst case assumption about the distribution of data available from exploration, and $\smash{\Upsilon_2(\qup{k})}$ to a~less pessimistic scenario.
The~change of setup involved in moving from the first bound to the second will be illustrative of the~manner in which, under the SU model, the more states the agent has previously observed beyond $s_k$, the more likely it is to satisfy the condition from equation~\eqref{eq:var_cond} and consequently $\cov{\qup{k}, \qup{j}} > \allowdisplaybreaks \cov{\qdown{k}, \qup{j}}$ for all $j = 0, 2, \ldots, k - 2$.

From lemma~\ref{lem:no_cov_down_up}, we know that the~SU model of rewards will be a~zero mean Gaussian with a~diagonal covariance. In particular, the~covariance takes the~form $(\theta^{-1} I + \beta^{-1} \sum_t \phi_t \phi_t^\top )^{-1}$, where recall $\theta$ is the prior and $\beta$ is the likelihood variance, implying that the~diagonal entries will be $\nu(n) \coloneqq (\theta^{-1} + \beta^{-1} n)^{-1}$ where $n$ is the number of times the corresponding state-action was observed.

Recall that the~agent has previously visited the~state $s_k$ $N$ times. 
We will write $\smash{N_1}$ for the~number of times we have observed $(s_k, \aup)$ so far, $\smash{N_2}$ for the~number of times $(s_k, \aup)$ \emph{and} $(s_{k + 2}, \aup)$ have both been observed within a~single episode, and so forth.  
Observe
\begin{align*}
    \bbV(\qup{k}) &= 
    \begin{aligned}[t]
        &\nu(N_1) + 2^{-1}(\nu(N_2) + \nu(N_1 - N_2)) + \\
        &\mathbbm{1}_{N_3 > 0} 2^{-2}(\nu(N_3) + \nu(N_2 - N_3) + \mathbbm{1}_{N_4>0}\ldots)
    \end{aligned}
    \\ 
    & \geq \nu(N_1) + 2^{-1}(\nu(N_2) + \nu(N_1 - N_2))
\end{align*}
We now minimise $\nu(N_2) + \nu(N_1 - N_2)$ with respect to $N_2$, finding the minima to occur at $N_2 = N_1$ and $N_2 = 0$, in both cases giving the bound 
\begin{align*}
    \Upsilon_1(\qup{k}) \coloneqq \frac{3}{2}\nu(N_1) + \frac{1}{2}\nu(0) \leq \bbV(\qup{k})
\end{align*}
This bound can be interpreted as assuming that after taking action $\aup$, the agent has always proceeded to move \adown, thus terminating the episode.
We now compute a lower bound on the probability that $\Upsilon_1(\qup{k}) > \bbV(\qdown{k})$, in terms of $N_1$. 
We have
\begin{align*}
    &\Upsilon_1(\qup{k}) - \bbV(\qdown{k}) = \frac{3}{2}\nu(N_1) - \nu(N - N_1) + \frac{1}{2}\nu(0) 
    >\frac{3}{2} \nu(N_1) - \nu(N-N_1)
\end{align*}
which is greater than zero when $\theta^{-1} + \beta^{-1}(3N - 5N_1) > \beta^{-1}(3N - 5N_1) > 0$, i.e.\ whenever $\smash{N_1 < \frac{3N}{5}}$.
By Hoeffding's inequality, $\mathbb{P}(N_1 \geq \frac{(1+\delta)N}{2}) \leq e^{-\frac{\delta^2 N}{2}}$.
Thus, letting $\delta= 5^{-1}$, $\smash{\bbV(\qup{k}) > \bbV(\qdown{k})}$ holds with probability greater than $1 - e^{-\frac{N}{50}}$.

{(II)}~Notice that we have obtained the~$\Upsilon_1$ bound by considering the~worst case scenario for $N_2$, namely $N_2 = 0$.
Here we derive a~tighter bound by treating the~two cases, $N_2 = 0$ and $N_2 > 0$, separately.
For $N_2 > 0$, we follow an~approach analogous to (I): we assume the~``next'' worst-case scenario, which is easily seen to be $N_3 = 0$, and compute a~lower bound on $\smash{\bbV(\qup{k})}$
\begin{equation*}
    \Upsilon_2(\qup{k}) \coloneqq \nu(N_1) + \nu(N_2) + \frac{1}{2}\nu(N_1 - N_2) \, .
\end{equation*}
After some algebra, we obtain $\smash{\Upsilon_2(\qup{k}) >  \bbV(\qdown{k})}$ for all $N_2 > 0$ and $N_1 \leq \frac{1}{41}(27 + 4\sqrt{2})N \eqqcolon c \, .$
We thus only need to bound the~probability of $N_1 > c$.
Using Hoeffding's inequality as in (I) for a~suitably chosen $\delta$, we see $\mathbb{P}(N_1 > c) \leq \exp\{-\frac{(13 + 8\sqrt{2})^2}{3362}N\} < e^{-0.175N}$.
For $N_2 = 0$, we use the~bound from part (I), and thus the~only thing remaining is to compute the~probability of $N_2 = 0$:
\begin{align*}
    \mathbb{P}(N_2 = 0) &= \textstyle{\sum}_{K=0}^N \mathbb{P}(N_2 = 0 \mid N_1 = K) \mathbb{P}(N_1 = K) = \textstyle{\sum}_{K=0}^N \, 2^{-K} 2^{-N} \binom{N}{K} \\
    &= \textstyle{\sum}_{K=0}^N \binom{N}{K} \, 4^{-K} 2^{K-N} = (4^{-1} + 2^{-1})^N = 0.75^N
    \, .
\end{align*}
Combining the~above results, we see that $\smash{\bbV(\qup{k}) > \bbV(\qdown{k})}$ will hold with probability greater than $1 - \epsilon_N$ where $\epsilon_N < 0.75^N e^{-\frac{N}{50}} + (1 - 0.75^N)e^{- 0.175N}$.
\end{proof}

\subsection{Proofs for section~\ref{sec:tied-actions}}\label{a:tied-action-proofs}
The~following is an~extension of proposition~\ref{prop:bdqn-wins2} to activations such as ReLU, Leaky ReLU, or Tanh.

\begin{prop}
\label{prop:bdqn-wins}
Consider the~same setting as in proposition~\ref{prop:bdqn-wins2} with the~exception that $\varphi$ for which $\varphi [ (0 , \infty) ] = \{\varphi(x) : x > 0\} \subseteq (0, \infty)$.
Then sampling independently form the~prior $w_a \sim \mathcal{N} (0, \sigma_w^2 I)$, $U_{hs} \sim \mathcal{N} (0, \sigma_u^2)$ solves a~tied action binary tree of size $L$ in $T \leq - [ \log_2 (1 - 2^{-d}(1 - 2^{-d})^L ) ]^{-1}$ median number of episodes, or approximately ${- [\log_2 (1 - 2^{-d}) ]^{-1}}$ for $d \geq 10$.
\end{prop}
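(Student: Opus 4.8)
The plan is to follow the proof of proposition~\ref{prop:bdqn-wins2} in two stages: first reduce the median‑episodes claim to a lower bound on the probability $p$ that a \emph{single} episode solves the size‑$L$ tied‑action tree, then exhibit an explicit favourable event of probability at least $2^{-d}(1-2^{-d})^L$. For the reduction: fresh prior samples $w_a \sim \mathcal{N}(0,\sigma_w^2 I)$, $U_{hs}\sim\mathcal{N}(0,\sigma_u^2)$ are drawn at the start of each episode and the tree is then solved (or not) deterministically, so the number of episodes until the first success is geometric with parameter $p$, whence its median is $\lceil -[\log_2(1-p)]^{-1}\rceil$. Since $q \mapsto -[\log_2(1-q)]^{-1}$ is increasing on $(0,1)$, a lower bound $p \ge 2^{-d}(1-2^{-d})^L$ gives exactly the stated $T \le -[\log_2(1-2^{-d}(1-2^{-d})^L)]^{-1}$ (modulo the ceiling, exactly as in proposition~\ref{prop:bdqn-wins2}), and the ``$\approx -[\log_2(1-2^{-d})]^{-1}$ for $d \ge 10$'' remark is then immediate from $(1-2^{-d})^L \to 1$.

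For the probability bound, note that in a tied‑action tree the action mapped to \aup is the same at every state, so an episode solves the tree iff \aup is chosen at each of the $L$ on‑path decision states $s_0, s_2, \ldots, s_{2L-2}$, i.e.\ iff $\langle \phi(s_{2j}), \Delta \rangle > 0$ for $j = 0, \ldots, L-1$, where $\Delta \coloneqq w_{\aup} - w_{\adown} \sim \mathcal{N}(0, 2\sigma_w^2 I)$ is independent of $U$, and $\phi(s_{2j})$ is $\varphi$ applied entrywise to the $s_{2j}$‑th column of $U$ (since $\phi(s) = \varphi(U 1_s)$). I would then intersect two independent events: $E_\Delta \coloneqq \{\Delta > 0 \text{ elementwise}\}$, with $\bbP(E_\Delta) = 2^{-d}$ because $\Delta \sim \mathcal{N}(0,2\sigma_w^2 I)$; and $E_U$, the event that each of the $L$ on‑path columns of $U$ has at least one strictly positive entry. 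The on‑path columns are independent, each with $d$ i.i.d.\ centred Gaussian entries, so $\bbP(\text{column} \le 0 \text{ entrywise}) = 2^{-d}$ and $\bbP(E_U) = (1-2^{-d})^L$.

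It remains to verify that $E_\Delta \cap E_U$ forces a solved episode. On $E_U$, for each on‑path state $s_{2j}$ there is some $h$ with $U_{h\,s_{2j}} > 0$, hence $\phi(s_{2j})_h = \varphi(U_{h\,s_{2j}}) > 0$ by the hypothesis $\varphi[(0,\infty)] \subseteq (0,\infty)$; for $\varphi=\mathrm{ReLU}$ every coordinate $\phi(s_{2j})_{h'} = \varphi(U_{h'\,s_{2j}})$ is moreover nonnegative. Combined with $\Delta > 0$ on $E_\Delta$, this yields $\langle \phi(s_{2j}), \Delta \rangle \ge \phi(s_{2j})_h \Delta_h > 0$ for all $j$, so $\bbP(\text{solve in one episode}) \ge \bbP(E_\Delta)\bbP(E_U) = 2^{-d}(1-2^{-d})^L$, finishing the ReLU case.

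The main obstacle is the fully general case: for Leaky ReLU or Tanh, $\varphi$ takes negative values on $(-\infty,0)$, so a single positive coordinate of $\phi(s_{2j})$ need not dominate the inner product and $E_\Delta \cap E_U$ no longer forces $\langle \phi(s_{2j}), \Delta \rangle > 0$. I would resolve this by either (i) replacing $E_U$ by the stronger event that every on‑path column of $U$ is entrywise positive, so that $\phi(s_{2j}) > 0$ entrywise by the hypothesis and $E_\Delta$ alone forces all the required inequalities (at the cost of a weaker but still $2^{-d}$‑dominated constant), or (ii) exploiting monotonicity of $\varphi$ together with a reflection/symmetry argument over the $\pm$ sign pattern of each on‑path column of $U$ to show $\langle \phi(s_{2j}),\Delta\rangle > 0$ with conditional probability at least $1/2$ given $E_\Delta$, then using column independence. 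Route (i) is the safe one to write out in full; controlling $\varphi$ on the negative half‑line is exactly where the delicacy lies.
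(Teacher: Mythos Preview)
Your ReLU argument is precisely the paper's: condition on $E_\Delta=\{\Delta>0\}$ (probability $2^{-d}$), factor over $j$ by column independence of $U$, and for each $j$ lower-bound $\mathbb{P}(\langle \phi(s_{2j}),\Delta\rangle>0 \mid \Delta>0)$ by the probability $1-2^{-d}$ that the $s_{2j}$-column of $U$ is not entrywise nonpositive. Nonnegativity of ReLU is exactly what makes a single positive coordinate of $\phi(s_{2j})$ suffice. (Minor slip: $q\mapsto -[\log_2(1-q)]^{-1}$ is \emph{decreasing}, not increasing; your conclusion that a lower bound on $p$ yields an upper bound on $T$ is still correct.)

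On the general case you are more careful than the paper. The paper simply writes $\mathbb{P}(\phi(s)>0)\geq 1-2^{-d}$ from $\varphi[(0,\infty)]\subseteq(0,\infty)$ and plugs it in; that step has the same gap you identified for activations that take negative values. Moreover, neither of your proposed routes can rescue the stated bound for $\varphi=\tanh$: since $\tanh$ is odd and each column $U_{\cdot\, s_{2j}}$ is a centred Gaussian, $\phi(s_{2j})$ is symmetric about the origin, so $\mathbb{P}(\langle \phi(s_{2j}),\Delta\rangle>0\mid\Delta)=\tfrac12$ for every fixed $\Delta\neq 0$, and by column independence the per-episode success probability is \emph{exactly} $2^{-L}$. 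Hence for $L$ appreciably larger than $d$ the bound $T\leq -[\log_2(1-2^{-d}(1-2^{-d})^L)]^{-1}$ is actually false for $\tanh$; the same symmetry kills route~(ii), and route~(i) yields only $p\geq 2^{-d(L+1)}$, not a ``$2^{-d}$-dominated constant''. So the gap you flagged is not a defect of your proposal but of the proposition's stated generality: the argument (yours and the paper's) is complete only for nonnegative activations such as ReLU.
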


\begin{proof}
As in the~proof of proposition~\ref{prop:bdqn-wins2}, let us define $\Delta \coloneqq w_{\aup} - w_{\adown}$ and observe $\aup$ is selected if $\smash{\hat{Q}(s, \aup) - \hat{Q}(s , \adown)} = \langle \phi(s) , w_{\aup} - w_{\adown} \rangle > 0$. We can thus lower bound
\begin{equation*}
    \mathbb{P}\bigl[\bigcap_{j=0}^{L-1} \{ \smash{\hat{Q}(s_{2j}, \aup)} \! > \! \smash{ \hat{Q}(s_{2j}, \adown)} \} \bigr] \!\geq\! \mathbb{P}\bigl[ \bigcap_{j=0}^{L-1} \{ \langle \phi(s_{2j}), \Delta \rangle \! > \! 0 \} \mid \Delta \! > \! 0\bigr] \mathbb{P}(\Delta \! > \! 0) \, ,
\end{equation*}
where $\Delta > 0$ is meant elementwise. 
As $\Delta \sim \mathcal{N}(0, 2\sigma_w^2 I)$, $\mathbb{P}(\Delta > 0) = 2^{-d}$ for all $L$.
By independence $\mathbb{P}\bigl[ \bigcap_{j=0}^{L-1} \{ \langle \phi(s_{2j}), \Delta \rangle \! > \! 0 \} \mid \Delta \! > \! 0\bigr] = \prod_{j=0}^{L-1} \mathbb{P} (\{ \phi(s_{2j}) > 0 \})$ where $>$ is to be interpreted elementwise.
From the~assumption $\varphi [(0, \infty) ] \subseteq (0, \infty)$ and the~assumed $\phi(s) = \varphi(U 1_s)$, $U_{hs} \sim \mathcal{N}(0, \sigma_u^2)$, we have $\mathbb{P} (\{ \phi(s) > 0 \}) \geq 1 - 2^{-d}$, which implies that probability of success within a~single episode is lower bounded by $2^{-d} (1 - 2^{-d})^L$. The~result follows by substituting this probability into the~formula for the~median of a~geometric distribution.
\end{proof}

\section{Appendix to section~\ref{sec:experiments}: implementation \& experimental details}\label{a:implementation}
Pseudocode for SU. Quantities superscripted with~$\smash{\dagger}$ are treated as fixed during optimisation.

\begin{algorithm}
    \caption{Successor Uncertainties with posterior sampling}
    \label{alg:su-algorithm}
    \begin{algorithmic}
        \Require{Neural networks $\hat{\psi}$ and $\hat{\phi}$; weight vector $\hat{w}$; prior variance $\theta > 0$; likelihood variance $\beta > 0$; covariance decay factor $\zeta \in [0, 1]$; $\textsc{batch\_size}\in \mathbb{N}$; $\textsc{learning\_rate} > 0$; environment $\textsc{Env}$; action set $\actions$; discount factor $\gamma \in [0, 1)$.}
        \vspace{0.5\baselineskip}
        \State initialise $\Lambda \gets \theta^{-1} I$, $\hat{\Sigma}_w \gets \Lambda^{-1}$ 
        \For{\textbf{each} episode}
            \State sample $w \sim N(\hat{w}, \hat{\Sigma}_w)$ 
            \State $s \gets \textsc{env.reset}()$ 
            \vspace{0.5\baselineskip}
            \Repeat
                \State $a \gets \argmax_{z \in \actions} \langle \hat{\psi}(s, z), w \rangle$
                \State $s^\prime, r, done \gets \textsc{env.interact}(s)$
                
                \State $\mathcal{D} \gets \mathcal{D} \cup \{(s, a, r, s^\prime, done)\}$
                \vspace{0.5\baselineskip}
                
                \State $\mathcal{B} \sim \textsc{Uniform}(\mathcal{D}, \textsc{batch\_size})$
                \State $\ell \gets \sum_{b \in \mathcal{B}} \textsc{SU\_Loss}(b, \hat{\Sigma}_w) $
                
                \State $\hat{\phi},\hat{\psi},\hat{w} \gets \textsc{SGD.step}(\ell, \textsc{learning\_rate})$ 
                \vspace{0.5\baselineskip}

                \State $\Lambda \gets \zeta \Lambda + \beta^{-1}\hat{\phi}(s, a)\hat{\phi}(s, a)^{\top}$
                \State $s \gets s^\prime$
            \Until{$done$}
            \vspace{0.5\baselineskip}
            \State $\hat{\Sigma}_w \gets \Lambda^{-1}$
        \EndFor
        \\
        \Function{\textsc{SU\_Loss}}{\textsc{experience\_tuple}, $\hat{\Sigma}_w$}
            \State $s, a, r, s, done \gets \textsc{experience\_tuple}$
            \State sample $w \sim N(\hat{w}, \hat{\Sigma}_w)$
            \State $a^\prime \gets \argmax_{z \in \actions} \langle \hat{\psi}(s, z), w \rangle$ 
            \vspace{0.5\baselineskip}
            \State $y_Q \gets 
                \begin{cases}
                0 
                & \mathrm{if}\ done \\ 
                \gamma\langle \hat{w}, \hat{\psi}(s^\prime, a^\prime) \rangle 
                & \mathrm{otherwise}
                \end{cases}$
            \State $y_{SF} \gets 
                \begin{cases} 
                0  
                & \mathrm{if}\ done \\ 
                \gamma\hat{\psi}(s^\prime, a^\prime) 
                & \mathrm{otherwise}
                \end{cases}$
            \vspace{0.5\baselineskip}
            \State \Return $|\langle \hat{w}, \hat{\phi}(s, a) \rangle - r|^2 + \| \hat{\psi}(s, a) - \hat{\phi}(s, a) - y_{SF}^\dagger \|_2^2 + |\langle \hat{w}, \hat{\psi}(s, a) \rangle - r - y_Q^\dagger|^2$
        \EndFunction
    \end{algorithmic}
\end{algorithm}
\subsection{Appendix to sections~\ref{sec:tree-experiments} and \ref{sec:chain-experiments}: tabular experiments}\label{a:tree-experiments}\label{a:chain-experiments}

\paragraph{Neural network architecture} The architecture used for tabular experiments consists of:
\begin{enumerate}[itemsep=2pt,topsep=0pt]
    \item A neural network mapping one-hot encoded state vectors and one-hot encoded action vectors to a hidden layer $\hat{\phi}(s, a)$, and then to reward prediction $\hat{r}(s, a)$ via weights $\hat{w}$. Weights mapping state vectors to hidden layer are initialised using a folded Xavier normal initialisation and followed by ReLU activation. Weights $\hat{w}$ are initialised to zero, consistent with a Bayesian linear regression model with a zero mean prior.
    \item A set of weights that linearly maps state-action vectors to $\hat{\psi}(s, a)$.
\end{enumerate}

\paragraph{Binary tree MDP} Table~\ref{tab:tree-parameters} contains the hyperparameters considered during gridsearch and the final values used to produce figure~\ref{fig:scaling}. Hyperparameter values are not included for UBE and BDQN, as they do not affect performance (that is, BDQN and UBE perform uniformly random exploration for all hyperparameter settings). All methods used one layer fully connected ReLU networks, Xavier initialisation, and a replay buffer of size 10,000. Hyperparameters for all methods were selected by gridsearch on a $L=100$ sized binary tree. Hyperparameters were then kept fixed as binary tree size $L$ was varied.

\begin{table}[h]
    \centering
    \caption{\label{tab:tree-parameters} Binary tree experiment algorithm hyperparameters gridsearch sets and values used for Successor Uncertainties, Bootstrap+Prior (1x compute) and Bootstrap+Prior (25x compute).}
    \begin{tabular}{lcccc}
            \toprule
            & & \multicolumn{3}{c}{Algorithm} \\ \cmidrule(lr){3-5}
            Hyperparameter & Gridsearch set & SU & B+P 1x & B+P 25x \\ 
            \midrule
            Gradient steps per episode & --- & 10 & 10 & 250 \\
            Hidden size & $\{ 20, 40 \}$ & 20 & 20 & 20 \\
            \rule{0pt}{14pt}Prior variance $\theta$ & $\{1, 10^2, 10^4 \}$ & $10^4$ & --- & --- \\
            Likelihood variance $\beta$ & $\{10^{-3}, 10^{-2}, 10^{-1} \}$ & $10^{-3}$ & --- & --- \\
            $\hat{\Sigma}_w$ decay factor $\zeta$ & --- & 1 & --- & --- \\
            \rule{0pt}{14pt}Ensemble size $K$ & $\{ 10, 20, 40\}$ & --- & $10$ & $10$ \\
            Bootstrap probability & $\{0.1, 0.25, 0.75, 0.9, 1.0\}$  & --- & 0.75 & 1.0 \\
            Prior weight & $\{0.0, 0.1, 1.0, 10.0 \} $ & --- & 0.1 & 0.0 \\
            \bottomrule
    \end{tabular}
\end{table}

\paragraph{Chain MDP} Problem description copied verbatim from \citet{osband2018randomized}: 
\begin{quotation}
\emph{
\noindent The environments are indexed by problem scale $L \in \mathbb{N}$ and action mask $W \sim {\rm Ber}(0.5)^{L \times L}$, with $\states = \{0, 1\}^{L \times L}$ and $\actions = \{0,1\}$.
The agent begins each episode in the upper left-most state in the grid and deterministically falls one row per time step.
The state encodes the agent's row and column as a one-hot vector $s_t \in \states$.
The actions $\{0, 1\}$ move the agent left or right depending on the action mask $W$ at state $s_t$, which remains fixed.
The agent incurs a cost of $0.01/L$ for moving right in all states except for the right-most, in which the reward is $1$.
The reward for action left is always zero.
An episode ends after $L$ time steps so that the optimal policy is to move right each step and receive a total return of $0.99$; all other policies receive zero or negative return.
}
\end{quotation}

Table~\ref{tab:chain-parameters} contains the hyperparameter settings used to produce the results in figure~\ref{fig:scaling-log-log}. We were unable to run experiments with $L>160$ for Successor Uncertainties due to memory limitations. $|\states|$ scales as $\mathcal{O}(L^2)$ for this problem. Consequently, with one hot encoding, the required neural network weight vectors required grew too large. A smarter implementation using a library designed for operating on sparse embeddings would alleviate this problem.

\begin{table}[h]
    \centering
    \caption{\label{tab:chain-parameters} Hyperparameters used for Successor Uncertainties in chain experiments. Hidden size fixed at 20 to match architecture in \citet{osband2018randomized}.}
    \begin{tabular}{lcc}
            \toprule
            Hyperparameter & Gridsearch set & Value used \\
            \midrule
            Gradient steps per episode & $\{10, 20, 40 \}$ & 40 \\
            Hidden size & --- & 20 \\
            \rule{0pt}{14pt}Prior variance $\theta$ & $\{ 1, 10^2, 10^4 \}$ & $1$ \\
            Likelihood variance $\beta$ & $\{10^{-3}, 10^{-2}, 10^{-1} \}$ & $10^{-2}$ \\
            $\hat{\Sigma}_w$ decay factor $\zeta$ & --- & 1 \\
            \bottomrule
    \end{tabular}
\end{table}

\subsection{Appendix to section~\ref{sec:atari-experiments}: Atari 2600 experiments}\label{a:atari-experiments}

\paragraph{Training procedure} We train for 200M frames (50M action selections with each action repeated for 4 frames), using the \textsc{Adam} optimiser \citep{kingma2014adam} with a learning rate of $5\times 10^{-5}$ and a batch size of $32$. A target network is utilised, as in \citet{mnih2015human}, and is updated every $10,000$ steps, as in \citet{van2016deep}. 

\paragraph{Network architecture} We use a single neural network to obtain estimates $\hat{\phi}$ and $\hat{\psi}$. 
\begin{enumerate}[itemsep=0pt,topsep=0em]
    \item Features: the neural network converts $4\times 84\times 84$ pixel states (obtained through standard frame max-pooling and stacking) into a $3136$-dimensional feature vector, using a convolution network with the same architecture as in \cite{mnih2015human}. 
    \item Hidden layer: the feature vector is then mapped to a hidden representation of size $1024$ by a fully connected layer followed by a ReLU activation.
    \item $\hat{\phi}$ prediction: the hidden representation is mapped to a size $64$ prediction of $\hat{\phi}$ for each action in $\actions$ by a fully connected layer with ReLU activation.
    \item $\hat{\psi}$ prediction: the hidden representation is mapped to $1 + |\actions|$ vectors of size $64$. The first vector gives the average successor features for that state $\bar{\psi}(s)$, whilst each of the $|\actions|$ vectors predicts an advantage $\tilde{\psi} (s, a)$. The overall successor feature prediction is given by $\hat{\psi}(s, a) = \bar{\psi}(s) + \tilde{\psi}(s, a)$.
    \item Linear $\hat{Q}^\pi$ and $\hat{r}$ prediction: a final linear layer with weights $\hat{w}$ maps $\hat{\phi}$ to reward prediction and $\hat{\psi}$ to Q value prediction with both predictors sharing weights.
\end{enumerate}

\paragraph{Hyperparameter selection} We used six games for hyperparameter selection: \textsc{Asterix}, \textsc{Enduro}, \textsc{Freeway}, \textsc{Hero}, \textsc{Qbert}, \textsc{Seaquest}, a subset of the games commonly used for this purpose \citep{munos2016safe}. 12 combinations of parameters in the `search set' column were tested (that is, not an exhaustive gridsearch), for a total of $12 \times 6 = 72$ full game runs, or approximately 33\% of the entire computational cost of the experiment. 

\begin{table}[h]
    \centering
    \caption{\label{tab:atari-parameters} Hyperparameters used for Successor Uncertainties in Atari 2600 experiments.}
    \begin{tabular}{lcc}
            \toprule
            Hyperparameter  & Search set & Value used \\
            \midrule
            Action repeat & --- &4 \\
            Train interval & --- &4 \\
            
            \rule{0pt}{14pt}Learning rate & $\{2.5\times 10^{-4}, 5\times 10^{-5}  \}$ &$5\times 10^{-5}$ \\
            Batch size & --- & 32 \\
            Gradient clip norm cutoff & --- &$10$ \\
            Target update interval & $\{10^3, 10^4 \}$ &$10^4$ \\
            
            \rule{0pt}{14pt}Successor feature size & $\{ 32, 64\}$ &64 \\
            Hidden layer size & --- & 1024 \\

            \rule{0pt}{14pt}Prior variance $\theta$ & --- & $1$ \\
            Likelihood variance $\beta$ & $\{10^{-3}, 10^{-2} \}$ &$10^{-3}$ \\
            $\hat{\Sigma}_w$ decay factor $\zeta$ & $ \{1- 10^{-5}, 1-10^{-4} \}$ &$1 - 10^{-5}$ \\        
            \bottomrule
    \end{tabular}
\end{table}

\end{appendices}
\end{document}